\documentclass{article} 
\usepackage{iclr2026_conference,times}


\usepackage{amsmath,amsfonts,bm}
\usepackage{accents}







\def\eqref#1{equation~\ref{#1}}









\def\1{\bm{1}}









\def\vc{{\bm{c}}}

\def\vk{{\bm{k}}}

\def\vt{{\bm{t}}}

\def\vx{{\bm{x}}}
\def\vy{{\bm{y}}}
\def\vz{{\bm{z}}}



\def\mI{{\bm{I}}}

\DeclareMathAlphabet{\mathsfit}{\encodingdefault}{\sfdefault}{m}{sl}
\SetMathAlphabet{\mathsfit}{bold}{\encodingdefault}{\sfdefault}{bx}{n}

\def\gA{{\mathcal{A}}}

\def\gX{{\mathcal{X}}}
\def\gY{{\mathcal{Y}}}






\DeclareMathOperator*{\argmax}{arg\,max}

\newcommand{\autorefLemma}[1]{%
  \hyperref[#1]{Lemma~\ref*{#1}}
}
\newcommand{\barbelow}[1]{\underaccent{\bar}{#1}}

\newcommand{\ours}{T1\xspace}

\usepackage[utf8]{inputenc} 
\usepackage[T1]{fontenc}    
\usepackage{hyperref}
\usepackage{url}
\usepackage{booktabs}       
\usepackage{amsfonts}       
\usepackage{nicefrac}       
\usepackage{microtype}      
\usepackage{xcolor}         
\usepackage{booktabs}
\usepackage{multirow}
\usepackage{graphicx}
\usepackage{makecell}
\usepackage{adjustbox}
\usepackage{siunitx} 
\usepackage{colortbl}
\usepackage{subcaption}
\usepackage{algorithm}      
\usepackage{algpseudocode}  
\usepackage{bm}
\usepackage{xspace}
\usepackage{amsmath}
\usepackage[nameinlink]{cleveref}
\usepackage{enumitem}

\usepackage{xcolor}
\usepackage[utf8]{inputenc} 
\usepackage[T1]{fontenc}    
\usepackage{hyperref}       
\usepackage{url}            
\usepackage{booktabs}       
\usepackage{amsfonts}       
\usepackage{nicefrac}  
\usepackage{amsthm}
\usepackage{microtype}      
\usepackage{xcolor}         
\usepackage[table]{xcolor}
\usepackage{enumitem}
\usepackage{graphicx}
\usepackage{multirow}
\usepackage{makecell}
\usepackage{caption}
\usepackage{subcaption}
\usepackage[flushleft]{threeparttable}
\usepackage{mathtools}
\usepackage[nameinlink]{cleveref}
\usepackage{soul}
\usepackage{arydshln}
\usepackage{stackengine}
\usepackage{wrapfig}
\usepackage{makecell}
\usepackage{colortbl}
\usepackage{arydshln}
\usepackage[most]{tcolorbox}
\usepackage{listings}
\usepackage{xspace}
\usepackage{boxedminipage}

\definecolor{Red}{rgb}{0.768, 0.054, 0.054}
\definecolor{Blue}{rgb}{0.152, 0.294, 0.925}
\definecolor{Green}{rgb}{0,0.4,0.7}
\hypersetup{
    colorlinks=true,
    citecolor=teal,
    linkcolor=Red,
    urlcolor=Green,
}

\newtcolorbox[auto counter, number within=section]{promptbox}[2][]{
  colback=white,
  colframe=blue!80!black,
  coltitle=white,
  title=Prompt~\thetcbcounter: #2,
  fonttitle=\bfseries\normalsize,
  boxrule=1pt,
  arc=2mm,
  top=2mm,
  bottom=2mm,
  width=0.95\textwidth,
  #1
}

\newtcolorbox[auto counter, number within=section]{examplebox}[2][]{
  colback=white,
  colframe=black,
  breakable,    
  boxrule=0.5pt,
  sharp corners,
  left=4pt,
  right=4pt,
  top=4pt,
  bottom=4pt,
  enhanced,
  title=Example~\thetcbcounter: #2,
  #1
}

\newtheorem{theorem}{Theorem}[section]

\newtheorem{lemma}[theorem]{Lemma}

\title{T1: Tool-integrated Verification for\\Test-time Compute Scaling in\\Small Language Models}


\iclrfinalcopy

\author{Minki Kang\textsuperscript{1}\thanks{Equal contribution. Work done at KRAFTON. Contact: \texttt{minkikang@kaist.ac.kr}}
\quad
Jongwon Jeong\textsuperscript{2}$^*$
\quad
Jaewoong Cho\textsuperscript{3}\\
\textsuperscript{1}KAIST
\quad\textsuperscript{2}University of Wisconsin-Madison
\quad\textsuperscript{3}KRAFTON
}

%

\begin{document}

\maketitle

\begin{abstract}
Recent studies have demonstrated that test-time compute scaling effectively improves the performance of small language models (sLMs).
However, prior research has mainly examined test-time compute scaling with an additional larger model as a verifier, leaving verification by sLMs underexplored.
In this work, we investigate whether sLMs can reliably verify the output candidates under test-time scaling.
We find that even with knowledge distillation from larger verifiers, sLMs struggle with verification tasks requiring memorization, such as numerical calculations and fact-checking.
To address this limitation, we propose \textbf{Tool-integrated verification (\ours)}, a two-stage framework that first filters candidates with external tools and then uses an sLM for final verification, offloading memorization-heavy steps to tools such as a code interpreter.
Within \ours, we prove that offloading to external tools reduces the memorization burden on sLMs and improves test-time scaling performance.
Experiments on the MATH benchmark demonstrate that, with \ours, a Llama-3.2 1B model under test-time scaling outperforms the significantly larger Llama-3.1 8B model. 
Moreover, \ours improves the verification accuracy of both process reward models (PRMs) and critic models.
Our findings highlight the potential of tool integration to substantially improve the verification abilities of sLMs.
\end{abstract}
\section{Introduction}
\label{sec:intro}

Recent advances in large language models (LLMs) have demonstrated strong emergent abilities through large-scale pretraining~\citep{GPT3, GPT4o, Gemini}, enabling them to tackle complex reasoning tasks such as mathematical problem-solving and competitive coding~\citep{CoT, VerifyStepbyStep}.
While small language models (sLMs) offer advantages in deployment efficiency and cost~\citep{MobileLLM, sLMsurvey}, they struggle significantly with high-complexity tasks~\citep{emergent}.

\emph{Test-time compute scaling} has emerged as a promising approach to enhance sLMs by dynamically allocating additional computation during inference~\citep{InferenceScalingLaw}. Prior works suggest that test-time scaling can surpass pretraining-based scaling~\citep{ScalingTestTime}, allowing a 3B LM to outperform a 405B LLM on mathematical benchmarks such as MATH and AIME~\citep{1BSurpass}. This success depends on reliable verification of generated solutions.

To enable reliable verification, existing approaches have leveraged process reward models (PRMs) and critic models~\citep{MathShepherd, GenRM}, but these typically require LLMs (7B+ parameters).
Relying on large verifiers counteracts the efficiency benefits of sLMs. 
Therefore, it remains unclear whether \emph{verification}, where sLMs verify the generated solutions, can enable strong reasoning capabilities without relying on larger models.
This raises a key research question:
\begin{quote}
\vspace{-0.05in}
    \textit{Can small language models reliably perform verification for test-time scaling?}
\vspace{-0.05in}
\end{quote}

While verification is often easier than generation, prior work has shown that sLMs still struggle to verify the solutions~\citep{mindthegap}. 
Our concept-proof experiment in \autoref{fig:concept}~(b) confirms this finding that larger models can reliably verify with chain-of-thought reasoning alone. In contrast, sLMs fail to verify even simple calculations, particularly as the complexity of calculations N increases.
We hypothesize that this gap is due to the limited capacity of sLMs to memorize all calculation facts required for verification \citep{LongtailKnowledge}.

However, as shown in \autoref{fig:concept}~(b), code generation and execution substantially improves sLM's verification accuracy, narrowing the gap with larger models even as $N$ increases.
This result suggests that integrating external tools with sLMs reduce the need to memorize arithmetic facts.
Therefore, tool integration is not merely beneficial but \emph{necessary} to enable successful verification in sLMs.

\begin{figure*}
    \centering
    \includegraphics[width=0.9\linewidth]{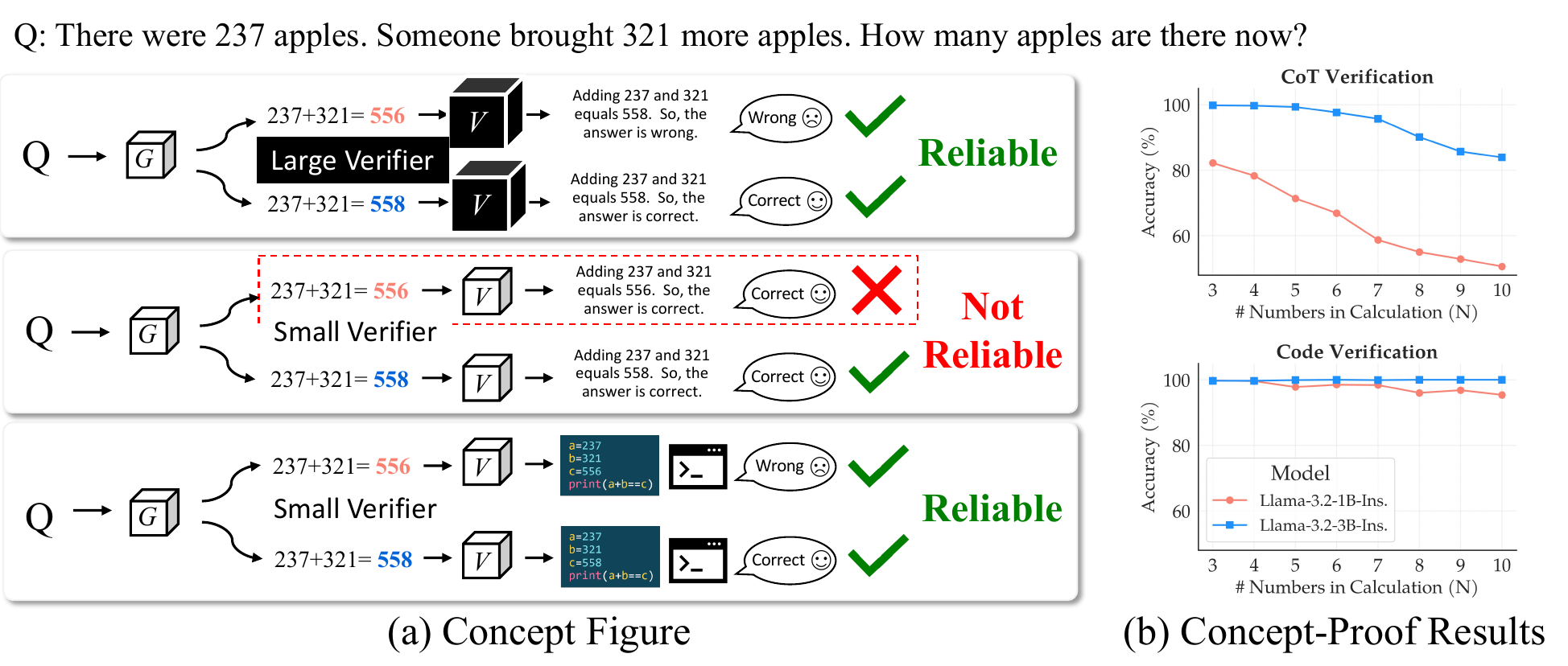}
    \vspace{-0.15in}
    \caption{\textbf{(a) Concept.} Small language models (sLMs) often fail due to their limited capacity. However, when sLMs utilize external tools, their reliability improves. \textbf{(b) Concept-Proof Experimental Results.} We evaluate Llama 1B and 3B models on verifying calculations of $N$ three-digit numbers. The 1B model's performance drops significantly as $N$ increases, while the 3B model remains stable. Enabling code generation and execution largely mitigates this drop for the 1B model.}
    \vspace{-0.15in}
    \label{fig:concept}
\end{figure*}

Motivated by these findings, we introduce \textbf{Tool-integrated Verification (T1)}, a two-stage framework in which external tools first verify candidate solutions and sLMs then verify the filtered solutions.
By offloading memorization-heavy steps, such as numerical calculations and fack-checking, to external tools, \ours enables sLMs to achieve verification accuracy comparable to much larger models without increasing parameters.
Crucially, this two-stage design applies seamlessly to both generative verifiers and process reward models (PRMs), offering a single framework for test-time scaling.
Moreover, the two-stage design theoretically guarantees that tool-based verification reduces the memorization burden and improves test-time scaling performance.

Our experiments demonstrate that \ours enables both generative verifiers~\citep{GenRM} and PRMs for more accurate verification.
This leads to significant performance improvements on widely-used mathematical and multi-domain reasoning benchmarks, with notable gains on GSM8K~\citep{GSM8K} and MATH~\citep{MATH}.
These results underscore the effectiveness of T1 in improving the performance of sLMs under test-time compute scaling.

Our contributions are as follows:
\begin{itemize}[itemsep=0.8mm, parsep=1pt, leftmargin=*]
\vspace{-0.1in}
    \item We conduct a systematic study of sLMs’ verification under test-time scaling, identify memorization-heavy steps in verification as a key bottleneck, and motivate addressing them with external tools.

    \item We propose Tool-integrated Verification (\ours), a two-stage framework that leverages external tools to offload memorization-heavy steps before sLM verification.

    \item We provide a theoretical analysis showing that external tools reduces the memorization burden and that two-stage design in \ours improves test-time scaling performance.

    \item We show that T1 integrates seamlessly with both generative verifiers and PRMs, achieving strong results on math reasoning benchmarks GSM8K and MATH.
\end{itemize}
\section{Related Works}
\label{sec:related-works}

\subsection{Test-time compute scaling}
Test-time compute scaling has emerged as a promising approach for improving the reasoning capabilities of large language models (LLMs)~\citep{InferenceScalingLaw}. It can be broadly categorized into \emph{sequential} and \emph{parallel} methods~\citep{ScalingTestTime}. Sequential scaling iteratively refines solutions by leveraging  post-training, enabling the model to perform self-reflection and verification \citep{s1, R1}. 
Parallel scaling, in contrast, generates multiple candidate solutions simultaneously and selects the best one using a verifier model \citep{GSM8K, VerifyStepbyStep, LargeLanguageMonkeys}. A common strategy is the \emph{best-of-N} method, which produces $N$ parallel outputs and ranks them based on verification scores \citep{GSM8K, VerifyStepbyStep}. Increasing $N$ has been shown to enhance LLM on challenging benchmarks \citep{LargeLanguageMonkeys, ScalingTestTime}.

In this work, we focus on the \emph{parallel scaling} paradigm due to its simplicity and popularity. Prior research shows that even small models can achieve strong results when paired with a large verifier in parallel scaling \citep{1BSurpass}. 
We further investigate whether small language models (sLMs) can verify, enabling test-time scaling without large models.

\subsection{Verifier in test-time compute scaling}
The verifier plays a crucial role in parallel scaling.  One approach, the Process Reward Model (PRM), scoring each reasoning step individually using a regression head, enables fine-grained feedback~\citep{VerifyStepbyStep, MathShepherd, VersaPRM}.
An alternative approach leverages an LLM itself as a \emph{Critic} model, prompting it to evaluate reasoning steps \citep{llmjudge}. \citet{ProcessBench} have shown that powerful LLM-based critic models can outperform PRM, particularly in mathematical reasoning tasks.
Recent works~\citep{GenRM, GenerativeRewardModel} proposed the Generative Reward Model (GenRM), formulating verification as a next-token prediction problem with chain-of-thought~\citep{CoT} improving interpretability in step-wise verification.

Despite these advances, ensuring consistent and high-quality step-wise verification remains an open problem for both PRM and GenRM. Additionally, prior works have not thoroughly explored the change in verification performance depending on the size of LMs.

\subsection{Tool-integrated language model}

The integration of external tools has significantly enhanced LLM capabilities.
Program-aided language models~\citep{PAL} introduced delegating computations to interpreters via synthesized code. 
Subsequent works expanded this by using tools like search engines and calculators for fact retrieval and arithmetic~\citep{Toolformer, ToolLLM}.
Recent methods further integrate tools into multi-step reasoning~\citep{ToRA, PaD}, reward modeling~\citep{ToolRM}, and self-correction~\citep{CRITIC}.

Our work extends this line of research by studying \textbf{how to design tool integration that benefits sLM verification across both process reward models (PRMs) and critic models}.
We formulate tool use as an additional dimension of test-time scaling, emphasizing its effectiveness in memorization-heavy verification tasks.
\section{Preliminaries} \label{sec:preliminaries}
\paragraph{Test-time scaling} 
Following~\citet{ScalingTestTime}, we view test-time scaling as modifying the model's proposal distribution.
Given a problem $\vx \in \gX$, we sample a solution $\vy$ from the policy $\pi(\vy \mid \vx, \mI_p; \theta)$ in the set $\gY$, where $\mI_p$ is the generator-specific instruction prompt, and the policy is parameterized by $\theta$ which refers to the pre-trained language models.

Several algorithms can be used to scale test-time computation, including those that adjust the input level (e.g., self-reflection~\citep{SelfReflection}) and those that modify the output level (e.g., best-of-N~\citep{GSM8K}, beam search~\citep{ToT}).
Among them, the best-of-N algorithm is a simple yet powerful approach. It samples multiple candidate solutions from the policy and selects the one with the highest score, as determined by the verifier, as the final prediction. Formally, the Best-of-N policy $\pi^{N}\left(\vy \mid \vx, \mI_p; \theta\right)$ is defined as:
\begin{equation} 
\argmax_{\vy \in \{\vy_1, \ldots, \vy_N\}} r(\vx, \vy), \quad \text{s.t.} \quad \vy_i \sim \pi(\vy \mid \vx, \mI_p; \theta),
\label{eqn:best-of-n}
\end{equation}
where $r(\vx, \vy): \gX \times \gY \to \mathbb{R}$ is a verifier that assigns a scalar score $r$.

\paragraph{Verifier} 
The verifier can be modeled using the following models: (1) process reward model (PRM), which assigns the score of each step of reasoning~\citep{MathShepherd}, (2) critique model, which generates a rationale for the verifiation~\citep{llmjudge}.
For both cases, the sequence of verification scores or tokens $\vz$ are sampled from $\pi(\vz \mid \vx, \vy, \mI_v; \theta)$ where $\mI_v$ is the verifier-specific instruction prompt. 
In general, we use the last score or token of the sequence as the final score for the solution~\citep{ScalingTestTime, GenRM}.
For instance, in generative verifier~\citep{GenRM}, the verification score can be obtained as follows:
\begin{equation}
r(\vx, \vy) = \pi(z_T = \text{`Yes'} \mid \vx, \vy, \mI_v, \vz_{1:T-1}; \theta), \quad \text{s.t.} \quad \vz_{1:T-1} \sim \pi(\vz \mid \vx, \vy, \mI_v; \theta),
\end{equation}
where $\vz_{1:T-1}$ is chain-of-thought~\citep{CoT} and last token $z_T \in \{ \text{`Yes'}, \text{`No'} \}$.

\section{Method}
\subsection{Tool-integrated verification}\label{sec:tool_verification}
Test-time scaling can improve a base policy model that generates valid solutions from its proposal distribution, but the effectiveness of scaling at test-time heavily relies on the performance of the verifier.
However, sLMs often struggle to reliably verify the correctness of their generated outputs~\citep{mindthegap}. Specifically, sLMs exhibit limitations in precisely validating numerical computation or detecting incorrect or outdated knowledge information, due to their limited parameter size and insufficient memorization capacity.

\begin{figure*}
    \centering
    \includegraphics[width=1.0\linewidth]{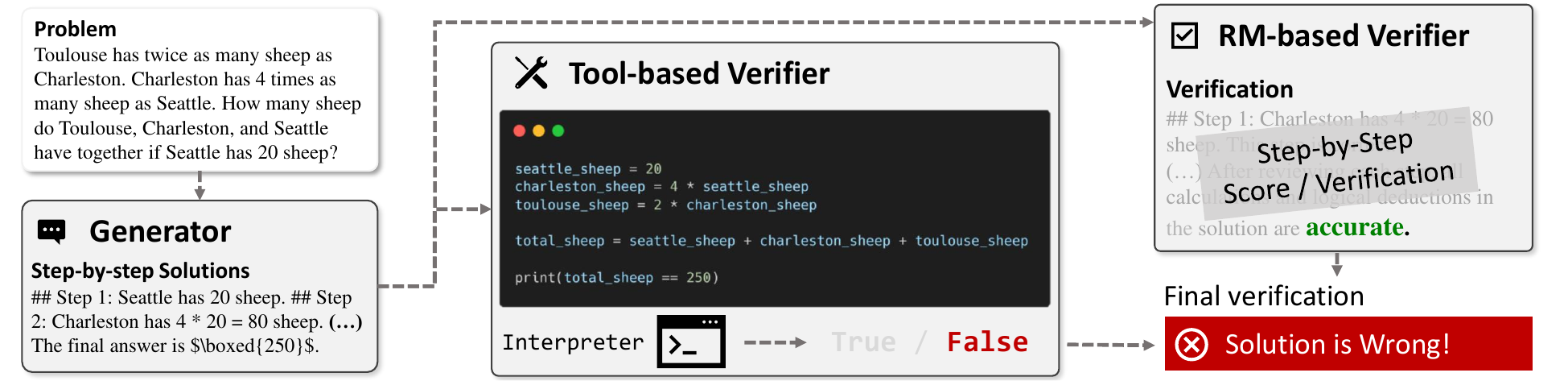}
    \caption{\textbf{Tool-integrated self-verification for mathematical reasoning.}  
(a) \textbf{Generator:} A small language model (sLM) may produce incorrect solutions due to calculation errors.  
(b) \textbf{Tool-based Verifier (ToolV):} The sLM generates executable code based on its reasoning; the output of the code is used to verify the solution’s correctness.  
(c) \textbf{Reward Model (RM)-based Verifier:} The reward model (GenRM / PRM) still evaluates the solution as before, but its verdict only contributes to the final decision if the solution passes the tool-assisted filter.
Concrete examples are in~\autoref{appendix:sec:case_analysis}.}
    \label{fig:method-examples}
\end{figure*}

To address these limitations, we propose a \textbf{Tool-integrated verification (\ours)} approach for parallel test-time scaling in sLMs.
As shown in~\autoref{fig:method-examples}, our verification approach involves two stages: 1) filtering stage with \textbf{Tool-based Verifier (ToolV)} and 2) scoring stage with reward model (RM)-based verifier, and formally can be expressed as:
\begin{equation}
\vy^* = \argmax_{\vy \in \{\vy_1, \ldots, \vy_N\}}  f(\vx, \vy; \mathcal{T}, \theta) \times r(\vx, \vy; \theta), \quad \text{s.t.} \quad \vy_i \sim \pi(\vy \mid \vx, \mI_p; \theta),\  \forall i \in \{1,\ldots, N\}, 
\label{eq:verifier_w_tool}
\end{equation}
where $f(\vx, \vy; \mathcal{T}, \theta) \in \{0, 1\}$ indicate a binary tool-based verification function (with 0 indicating a filtered-out response), $\mathcal{T}$ denotes the utilized tool (e.g., code interpreter, retriever), and $r(\vx, \vy;\theta)$ denotes the verifier score defined in \autoref{eqn:best-of-n}.

\paragraph{Tool-based verifier stage} 
In this stage, sLM utilizes the external tool $\mathcal{T}$, such as a code interpreter or knowledge retriever, to verify generated outputs.
Here, we assume the scenario in which the utilized tool $\mathcal{T}$ is explicitly known.
Specifically, sLM uses these tools to verify numerical accuracy and validate the knowledge in generated solutions.
One specific example is that multiple generated responses are filtered based on tool-based verifiers, discarding those with incorrect calculations or inaccurate knowledge information.
Specifically, the tool-based verification function, $f(\vx, \vy; \mathcal{T},\theta)$, consists of three parts: 1) generating the tool-calling query (i.e., $\vc_1$), 2) the execution of the tool (i.e., $\mathcal{T}(\cdot)$), and 3) extraction of the verification (i.e., $\vc_2$).
Therefore, $f(\vx, \vy; \mathcal{T},\theta)$ can be represented as:
\begin{equation}
f(\vx, \vy; \mathcal{T},\theta) = \vc_2 \sim \pi\left(\vc \mid \mathcal{T}\left(\vc_1\right), \vx, \vy, \mI_f; \theta\right), \quad\text{where}\quad \vc_1 \sim \pi(\vc \mid \vx, \vy, \mI_c; \theta),
\label{eqn:tool-general}
\end{equation}
and $\mI_f$ and $\mI_c$ are task-specific instruction prompts.
Detailed formulation for mathematical reasoning and knowledge-intensive tasks are represented in~\autoref{appendix:detail-method}.

\paragraph{RM-based verifier stage} Following ToolV stage, the remaining generated responses are scored using a reward model, the same model used for generation and filtering. This reward model assesses the overall logical consistency, coherence, and correctness of each response. The final output is chosen as the response with the highest reward score. 

\subsection{Verifier distillation}
\label{sec:distillation}
To further enhance the performance of both verification stages, we employ knowledge distillation~\citep{KD, SeqKD} from LLMs.
Specifically, we fine-tune sLM using tool-based and RM-based verifications generated by a larger teacher model $\theta_T$.
To efficiently manage multiple distinct tasks during the distillation process, we adopt a multi-LoRA~\citep{LoRA} approach, assigning separate LoRA adapters, $\Delta\theta_{\sf tool}$, and $\Delta\theta_{\sf reward}$, for each verifier.

The distillation for ToolV is formulated as:
\begin{equation} 
\mathcal{L}_{\sf tool}(\Delta\theta_{\sf tool}) = - \mathbb{E}_{\vx \sim \mathcal{X}_{\sf train},\ \vy \sim \pi(\cdot\mid \vx, \mI_p; \theta),\ \vc \sim \pi(\cdot\mid \vt, \vx, \vy, \mI; \theta_T)} \log \pi(\vc \mid \vt, \vx, \vy, \mI; \theta+\Delta\theta_{\sf tool}),
\label{eqn:tool-distillation}
\end{equation}
where $\mathcal{X}_{\sf train}$ is training dataset, $\vc \in \{\vc_1, \vc_2\}$, $\mI \in \{\mI_c, \mI_f\}$, and $\vt$ is empty $\phi$ or the output of $\mathcal{T}\left(\vc_1\right)$.
Note that we generate tool-based verifications using the teacher model by applying corresponding instruction $\mI$ in a zero-shot manner~\citep{InstructGPT, FlanT5}.

Similarly, distillation for the RM-based verifier is expressed as: 
\begin{equation} 
\mathcal{L}_{\sf reward}(\Delta\theta_{\sf reward}) = - \mathbb{E}_{\vx \sim \mathcal{X}_{\sf train},\ \vy \sim \pi(\cdot\mid \vx, \mI_p; \theta),\ \vz \sim \pi(\cdot\mid \vx, \vy, \mI_r; \theta_T)} \log \pi(\vz \mid \vx, \vy, \mI_r; \theta+\Delta\theta_{\sf reward}),
\label{eqn:hard-distillation}
\end{equation} 
where $\mI_r$ is RM-based verifier-specific instruction. In~\autoref{eqn:hard-distillation}, responses are first sampled from the student model's proposal distribution. Each sampled response is then verified by the teacher model, and finally, the student model is fine-tuned based on these verifications.

\section{Theoretical analysis}
\label{sec:theoretical-analysis}

In this section, we present theoretical analyses of two aspects with conceptual illustrations: 
(1) how external tools reduce memorization requirements and improve verification performance (\autoref{sec:theoretical-memorization}), and 
(2) how our two-stage approach guarantees improved performance under test-time scaling (\autoref{sec:theoretical-tov-filter}).

\subsection{Memorization bound with \& without tool}\label{sec:theoretical-memorization}
Let us consider a simple verification task in which a verifier assesses whether the given equation $a+b=c$ is true or not.
Let this task's data distribution is $q$ and $P=\left((a,b,c),r\right) \sim q$.
Assume we have $|\mathcal{X}|$ number of training samples such that $\mathcal{X} = \left\{ \left((a,b,c), r\right) \;\middle|\; a,b \in \{0,\dots,M-1\},\; c \in \{0,\dots,2M-2\},\; r=\mathbf{1}_{a+b=c} \right\}$, where $((a,b,c),r)$ is independently sampled according to $q$.
Then, let $X \sim q^{\otimes |\mathcal{X}|}$ be the random variable representing the distribution of the training set $\mathcal{X}$. 
A learning algorithm $\gA$ receives $X$ to produce $\theta$ such that $\theta = \gA\left(X\right)$.
Then, we call that $\gA$ is \emph{\(\varepsilon\)-close-to-optimal} if $\mathrm{err}_{q, |\mathcal{X}|}(\gA) \;\le\; \mathrm{err}_{q, |\mathcal{X}|}\bigl(\gA_{\mathrm{\sf OPT}}\bigr) + \varepsilon$, where $\mathrm{err}_{q,|\mathcal{X}|}(\gA) = \Pr_{X \sim q^{\otimes |\mathcal{X}|}, ((a,b,c),r)\sim q,\hat{r}\sim\pi(\vc|a+b,c, \mI;\theta=\gA(X))}(\hat{r}\neq r)$ and $\gA_{\mathrm{\sf OPT}}$ is the optimal learning algorithm.
Then, \autorefLemma{thm:direct-memorization} shows how much information of $X$ should be memorized within $\theta$ to satisfy almost zero error.

\begin{lemma}[Memorization without Tool~\citep{MemorizationCapacity}]
\label{thm:direct-memorization}
Any learning algorithm $\gA$ that is 
$\varepsilon$-close-to-optimal with sufficiently small 
$\varepsilon>0$ also satisfies $I\left(X;\theta|\;P\right) \;=\; \Omega\!\left(M^{3}\right)$, where $I$ is the mutual information.
\end{lemma}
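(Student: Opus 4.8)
The plan is to read the statement as an instantiation of the long-tail memorization lower bound of \citet{MemorizationCapacity}, where the many ``sub-populations'' are the individual triples $(a,b,c)$, and then supply the counting that turns that bound into the $M^{3}$ rate. The first thing I would do is pin down the precise formal instance (the excerpt states it in abbreviated form): one needs $q$ to spread its mass over $\Theta(M^{3})$ triples and each triple's verdict to be information-bearing in the sense required by the cited result (e.g. a suitably randomized/balanced ground truth rather than a verdict one could shortcut), so that memorizing it is genuinely forced. With that in hand, the support has size $M\cdot M\cdot(2M-1)=\Theta(M^{3})$, hence $|\mathcal{X}|=\Theta(M^{3})$, and feeding $|\mathcal{X}|$ independent draws of $q$ into $\gA$, a Poissonization / coupon-collector estimate shows that with probability $1-o(1)$ a constant fraction of the triples appear \emph{exactly once} in $X$ (the ``singletons,'' a random set $\mathcal{S}(X)$ with $\lvert\mathcal{S}(X)\rvert=\Theta(M^{3})$) while another constant fraction do not appear at all. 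These singletons are the rare sub-populations on which the cited lower bound operates.

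Next I would convert $\varepsilon$-closeness into accuracy on the tail and isolate the information that must be stored. Since a constant fraction of test triples are never observed, and on such a triple the verdict is no more predictable than its prior (in the regime the cited bound assumes), $\mathrm{err}_{q,|\mathcal{X}|}(\gA_{\mathrm{\sf OPT}})$ is bounded below by an absolute constant; hence to come within small $\varepsilon$ of it, $\gA$ must answer correctly on all but an $O(\varepsilon)$ fraction of $\mathcal{S}(X)$ in expectation over $X$. The step I expect to be the main obstacle is making rigorous that this accuracy \emph{forces memorization}: one must argue, following \citet{MemorizationCapacity}, that the label of a singleton triple $t$ contributes $\Theta(1)$ bits of information about $X$ that can reach $\theta$ only through the single example $t$, and one must set the reduction up so that the addition-verification family genuinely meets the hypotheses under which this holds (this is precisely where the tool-augmented variant will behave differently, since a code interpreter produces the verdict instead of $\theta$ storing it). I would get the per-triple claim $I\!\left(r_{t};\theta \mid X_{-t},\,P\right)=\Omega(1)$, where $X_{-t}$ is the training set with the example $t$ omitted, by combining a leave-one-out / resampling coupling of the runs of $\gA$ with a Fano / Le~Cam two-point argument.

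Finally I would aggregate. Writing $R=(r_{t})_{t\in\mathcal{S}(X)}$ and conditioning on the realization of $\mathcal{S}(X)$, the decoder ``evaluate the learned verifier on each singleton query'' recovers $R$ from $(\theta,\mathcal{S}(X))$ up to an $O(\varepsilon)$ fraction of coordinates, so Fano (with concavity of binary entropy across coordinates) gives $H\!\left(R\mid\theta,\mathcal{S}(X)\right)\le\bigl(H_{2}(O(\varepsilon))+o(1)\bigr)\lvert\mathcal{S}(X)\rvert=o(M^{3})$ for small $\varepsilon$, while $H\!\left(R\mid\mathcal{S}(X)\right)=\Theta(M^{3})$ by the information-bearing property of the singleton labels. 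Since $\mathcal{S}(X)$ is a deterministic function of $X$,
\[
I(X;\theta)\;\ge\;I\!\left(X;\theta\mid\mathcal{S}(X)\right)\;\ge\;I\!\left(R;\theta\mid\mathcal{S}(X)\right)\;=\;H\!\left(R\mid\mathcal{S}(X)\right)-H\!\left(R\mid\theta,\mathcal{S}(X)\right)\;=\;\Omega(M^{3}),
\]
and conditioning on the single test sample $P$ changes this by at most $H(P)=O(\log M)$, yielding $I(X;\theta\mid P)=\Omega(M^{3})$. The cubic rate is nothing more than the size of the triple index set; everything else is the Fano-plus-long-tail machinery of \citet{MemorizationCapacity}, so the genuine work is in fixing the formal instance and verifying that $q^{\otimes|\mathcal{X}|}$ really produces the required population of rare, information-bearing sub-problems.
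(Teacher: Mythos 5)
Your route is genuinely different from the paper's. The paper's proof of \autorefLemma{thm:direct-memorization} is a two-line citation: it counts the training set as $|\mathcal{X}|=2(M-1)^{3}=\Theta(M^{3})$ and then invokes Theorem~1.1 of \citet{MemorizationCapacity} as a black box (``mutual information is at least proportional to the number of samples times the data dimension''), with dimension $1$, to read off $\Omega(M^{3})$. You instead reconstruct the machinery \emph{inside} that theorem --- the singleton/long-tail decomposition of $q^{\otimes|\mathcal{X}|}$, the leave-one-out coupling giving $\Omega(1)$ bits per singleton, and the Fano aggregation $I(X;\theta)\ge H(R\mid\mathcal{S}(X))-H(R\mid\theta,\mathcal{S}(X))$ --- and then do the same $\Theta(M^{3})$ count of the index set. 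What your version buys is transparency about \emph{which hypotheses} of the cited bound actually need to hold here; what the paper's version buys is brevity, at the cost of treating the applicability of Theorem~1.1 as self-evident.

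That applicability is exactly where the real issue sits, and you have correctly located it: as literally defined in \autoref{sec:theoretical-memorization}, the label $r=\mathbf{1}_{a+b=c}$ is a \emph{deterministic, simply computable} function of the query, so a learner that encodes the addition rule (rather than the table) achieves zero error on unseen triples with $I(X;\theta\mid P)=0$. The long-tail lower bound of \citet{MemorizationCapacity} requires the singleton labels to be information-bearing --- unpredictable given the rest of the data --- which forces the ``no shortcut'' reading you describe (e.g.\ treating each triple's verdict as an independent, effectively random bit that can only reach $\theta$ through its one occurrence in $X$). Your proof is conditional on fixing the formal instance so that this holds, and you say so; the paper's proof silently assumes it. So the remaining gap in your argument (making $I(r_{t};\theta\mid X_{-t},P)=\Omega(1)$ rigorous for this particular family) is the same gap the paper papers over by citation, and your write-up is the more honest account of what would have to be verified for the lemma to be true as stated.
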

\begin{proof}[Proof sketch] 
\vspace{-0.1in}
Theorem 1.1 in~\citet{MemorizationCapacity} said that $I(X;\theta\mid P)$ is proportional to at least dataset size. Since $|\mathcal{X}|=2\cdot(M-1)^3$, we can get $\Omega\!\left(M^{3}\right)$. Refer to \autoref{appendix:thm1_proof} for the detailed proof.
\vspace{-0.1in}
\end{proof}

On the other hand, using an external tool that verifies whether $a + b = c$ allows the model to avoid memorizing the full table of sums. Specifically, define a tool $\mathcal{T}$.
Suppose $\theta$ is generated by learning algorithm $\mathcal{A}$ that has access to $\mathcal{T}$, and that $f(a, b, c; \theta, \mathcal{T}) = \mathbf{1}_{\,a + b = c\,}$ holds.
Then we obtain the following result:
\begin{theorem}[Memorization with Tool]
\label{thm:code-snippet} 
Suppose $\theta$ is generated by learning algorithm $\mathcal{A}$ that has access to $\mathcal{T}$, and that $f(a, b, c; \theta, \mathcal{T}) = \mathbf{1}_{\,a + b = c\,}$ holds. Then, any learning algorithm $\gA$ that is 
$\varepsilon$-close-to-optimal with sufficiently small 
$\varepsilon>0$ also satisfies $I\left(X;\;\theta|\;P\right)\;=\;0$, where $I$ is the mutual information.
\end{theorem}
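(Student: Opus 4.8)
The statement is universal --- it asserts $I(X;\theta\mid P)=0$ for \emph{every} $\varepsilon$-close-to-optimal learner with access to $\mathcal{T}$ --- so the plan is to show that, once $\mathcal{T}$ is available, being near-optimal \emph{forces} the verification behaviour of the learned $\theta$ to be a single data-independent object, after which $I(X;\theta\mid P)=0$ is immediate. I would proceed in three steps: (1) with $\mathcal{T}$ the optimal error is $0$ and is witnessed by an $X$-independent hypothesis; (2) taking $\varepsilon$ below a suitable $c/M$ then pins every near-optimal learner to (an encoding of) that hypothesis; (3) a conditionally constant hypothesis is independent of $X$.

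\textbf{Step 1 (optimal error is $0$).} The theorem already supplies an $\gA$ with tool access whose output obeys $f(a,b,c;\theta,\mathcal{T})=\mathbf{1}_{a+b=c}$, hence $\mathrm{err}_{q,|\mathcal{X}|}(\gA)=0$ and $\mathrm{err}_{q,|\mathcal{X}|}(\gA_{\mathrm{OPT}})=0$. To expose the mechanism I would exhibit one such hypothesis $\theta_0$ explicitly: on the prompts $\mI_c,\mI_f$ it maps $(a,b,c)$ to the query $\vc_1=$``\texttt{print(a+b==c)}'' by a fixed templating rule, calls $\mathcal{T}$, and returns $\vc_2=\mathbf{1}[\mathcal{T}(\vc_1)=\texttt{True}]$; every coordinate of $\theta_0$ is an $O(1)$-description object that uses no property of $X$, so $X\mapsto\theta_0$ is already optimal. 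This is precisely the ingredient unavailable in \autorefLemma{thm:direct-memorization}: in the setting there (where the model cannot carry out the arithmetic internally, as otherwise its $\Omega(M^3)$ bound would be vacuous) no $X$-independent $\theta$ is correct on the whole domain, which is why $\Omega(M^3)$ bits of $X$ must be forced into $\theta$.

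\textbf{Step 2 (near-optimality forces $\theta_0$'s behaviour).} Fix any $\varepsilon$-close-to-optimal $\gA$ with tool access; Step 1 gives $\mathrm{err}_{q,|\mathcal{X}|}(\gA)\le\varepsilon$. The domain is finite and $q$ puts mass $\Theta(1/M)$ on each of its $\Theta(M^3)$ points, while a learner that does \emph{not} route its verdict through $\mathcal{T}$ can, in the arithmetic-free class, only reproduce labels it has memorised --- at most a constant fraction of the domain --- and so errs with probability $\Omega(1/M)$ on points outside its training sample. Hence for $\varepsilon<c/M$ every near-optimal $\gA$ must (a) use $\mathcal{T}$ and (b) be exactly correct on the whole domain for (almost) every $X$, i.e. $f(a,b,c;\theta,\mathcal{T})=\mathbf{1}_{a+b=c}$ identically with $\theta=\gA(X)$. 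The core of the proof is then to show that this exact correctness, under the intended reading of \autoref{eqn:tool-general} (the verdict $\vc_2$ is a deterministic function of $\vc_1$ and $\mathcal{T}(\vc_1)$), forces the query map $(a,b,c)\mapsto\vc_1$ to submit the comparison $a+b\overset{?}{=}c$ to $\mathcal{T}$ and the readout to return its boolean answer --- a tool-mediated protocol that is unique up to a fixed, data-independent choice of encoding. Consequently the induced verification map $g_\theta\colon(a,b,c)\mapsto f(a,b,c;\theta,\mathcal{T})$ equals $\mathbf{1}_{a+b=c}$ for every value of $X$: $g_\theta$ is a constant random variable.

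\textbf{Step 3 and the main obstacle.} Since the learned hypothesis $g_\theta$ --- the object whose information content \autorefLemma{thm:direct-memorization} lower-bounds --- is (conditionally) constant, and a conditionally constant random variable shares zero mutual information with anything, $I(X;\theta\mid P)=I(X;g_\theta\mid P)=0$; this step, the $\theta_0$ construction, and the Markov/finite-support reduction ``$\varepsilon$ small $\Rightarrow$ exact correctness'' are all routine. The genuine obstacle is Step 2's uniqueness claim, and in particular ruling out ``near-optimal'' learners that reach $f=\mathbf{1}_{a+b=c}$ while \emph{also} padding $\theta$ with, say, a verbatim copy of $X$: such a $\theta$ leaves $f$ unchanged but has $I(X;\theta\mid P)>0$, so the clean conclusion $I=0$ is really a statement about the learned hypothesis $g_\theta$ --- the same granularity at which \autorefLemma{thm:direct-memorization} is phrased --- rather than about the raw parameter vector. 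Making this precise (fixing a canonical representation of $\theta$, then establishing uniqueness-up-to-encoding of the forced tool protocol) is where I expect the work to concentrate; the remaining pieces are standard. Full details are deferred to the appendix.
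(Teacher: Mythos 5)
There is a genuine gap, and it sits exactly where you yourself placed the weight of the argument. Your Step 2 --- that for $\varepsilon$ small enough every near-optimal tool-using learner is forced into a tool-mediated protocol that is unique up to a data-independent encoding, so that the learned object is conditionally constant --- is never carried out (it is deferred to an appendix that does not exist) and cannot be carried out as stated: your own padding example shows that a learner may output $\theta$ containing a verbatim copy of $X$ while still satisfying $f(a,b,c;\theta,\mathcal{T})=\mathbf{1}_{a+b=c}$; such a learner is exactly optimal yet has $I(X;\theta\mid P)=H(X\mid P)>0$. Retreating to ``the statement is really about $g_\theta$'' changes the theorem rather than proving it --- the conclusion as written is about $\theta$ itself --- so your argument establishes at most the existence of a zero-memorization optimal learner, not the universal claim you set out to prove; moreover the uniqueness-up-to-encoding of the query and readout maps is neither needed for that weaker claim nor true in general (a learner could, for instance, verify a few triples internally and route the rest through $\mathcal{T}$ while remaining $\varepsilon$-close-to-optimal).

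The paper's proof does not attempt your Step 2 at all. It reads the hypothesis ``$\theta$ is generated by $\gA$ with access to $\mathcal{T}$ and $f(a,b,c;\theta,\mathcal{T})=\mathbf{1}_{a+b=c}$'' as saying that $\gA$ outputs the fixed tool-calling parameterization regardless of $X$: correctness of $f$ gives $\mathrm{err}_{q,|\mathcal{X}|}(\gA)=0$, and independence of $\theta$ from $X$ gives $H(\theta\mid X,P)=H(\theta\mid P)$, hence $I(X;\theta\mid P)=0$ in two lines. In other words the intended content is constructive: with the tool there is a learner achieving optimal error while memorizing nothing, to be contrasted with the $\Omega(M^3)$ bound of \autorefLemma{thm:direct-memorization}, which holds for every near-optimal learner without the tool. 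Your Step 1 construction of the explicit $X$-independent $\theta_0$ matches this intent and would essentially suffice if you dropped Step 2 and used the data-independence of $\theta$ as part of the hypothesis, as the paper does; as submitted, the proposal overreaches toward a universal statement that your own counterexample defeats and leaves its central step unestablished.
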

\begin{proof}[Proof sketch]
\vspace{-0.1in}
As the learning algorithm can access an external tool $\mathcal{T}$, then $\theta=\gA(X)$ such that $f(a,b,c;\theta,\mathcal{T})=\mathbf{1}_{\,a+b \;=\;c\,}$. 
$f$ makes $\mathrm{err}_{q,|\mathcal{X}|}(\gA) =0$. Also, $\theta$ is independently determined regardless of $X$, resulting in $I(X;\theta \mid P)=0$.
See \autoref{appendix:thm2_proof} for the detailed proof.
\vspace{-0.1in}
\end{proof}
$I\left(X;\;\theta\mid\;P\right)$ quantifies the amount of information about $X$, drawn from $P$, that must be memorized in $\theta$ learned by $\mathcal{A}$ to achieve near-zero error.
By comparing $I\left(X;\;\theta\mid\;P\right)$ from \autorefLemma{thm:direct-memorization} and \autoref{thm:code-snippet}, we demonstrate that a tool drastically reduces the required memorization of $X$ within $\theta$, lowering $I\left(X;\;\theta\mid\;P\right)$ from $\Omega\!\left(M^{3}\right)$ to $0$.
Consequently, this result implies that with the tool, small models become reliable for the verification task.

\subsection{Effect of tool-based verifier on test-time scaling}\label{sec:theoretical-tov-filter}

We employ the toy setting introduced in \citet{TheoreticalBoN}.
Specifically, for given input $\vx$, the ground-truth label produced by this generator is set as $1$, and the generator $\pi$ produces binary outputs, i.e., $\mathcal{Y}=\{0, 1\}$.
Furthermore, we consider an imperfect verifier inducing noise.
Then, we can show~\autoref{thm:monotonicity} that increasing $q_1$ directly increases the probability of obtaining a correct output from the best-of-$N$.

\begin{theorem}[Monotonicity of Imperfect Verifier]\label{thm:monotonicity}
Let the generator output $0$ or $1$ with equal probability, i.e., $ \pi\left(0|\vx\right) = \pi\left(1|\vx\right) = \frac{1}{2}$, and the verifier $r$ with noise level $p, q$ be defined as follows:
\begin{equation*}
r\left(\vx,0\right) = \begin{cases} 
0, & \text{w.p. } p, \\[1mm]
1, & \text{w.p. } 1-p,
\end{cases}
\quad
r\left(\vx,1\right) = \begin{cases} 
1, & \text{w.p. } q, \\[1mm]
0, & \text{w.p. } 1-q,
\end{cases}
\end{equation*}
with the condition that $p > 1-p$ and $q > 1-q$. 
Assume $\bar{p}$ and $\barbelow{p}$ be the noise level of two verifiers with $\bar{p} > \barbelow{p}$. Then, for any $N \ge 2$, 
\begin{equation}
\pi^N(1\mid \vx)\Big|_{p = \bar{p}} \;>\; \pi^N(1\mid \vx)\Big|_{p = \barbelow{p}}\,.
\end{equation}
\vspace{-0.2in}
\end{theorem}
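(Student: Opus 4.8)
The plan is to compute $\pi^N(1\mid\vx)$ explicitly as a function of the noise level $p$ (holding $q$ fixed) and then show the resulting expression is strictly increasing in $p$ on the relevant range. First I would set up the best-of-$N$ selection rule carefully in this binary toy model. Among the $N$ i.i.d.\ generator samples, let $K$ be the number of them equal to $1$; since $\pi(0\mid\vx)=\pi(1\mid\vx)=\tfrac12$, we have $K\sim\mathrm{Binomial}(N,\tfrac12)$. The verifier assigns score $1$ to a ``$1$''-sample independently with probability $q$ and score $1$ to a ``$0$''-sample independently with probability $1-p$. The best-of-$N$ policy returns a $1$ precisely when, after ranking by verifier score, a $1$-labeled sample ends up on top; I would adopt the natural tie-breaking convention (e.g.\ uniform among the argmax set, or, as in \citet{TheoreticalBoN}, whatever convention the cited toy setting fixes) and make it explicit. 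The cleanest route is: condition on $K=k$, and note that the output is a $1$ unless \emph{all} $k$ correct samples score $0$ while \emph{at least one} of the $N-k$ incorrect samples scores $1$ (plus half the probability of the all-score-equal ties, depending on convention). This gives a closed form
\begin{equation*}
\pi^N(1\mid\vx) \;=\; \sum_{k=0}^{N}\binom{N}{k}\frac{1}{2^N}\,g_k(p,q),
\end{equation*}
where $g_k(p,q)$ is an explicit polynomial in $p$ and $q$ built from the factors $q,\ 1-q,\ p,\ 1-p$.

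Next I would differentiate $\pi^N(1\mid\vx)$ with respect to $p$, or equivalently compare the value at $\bar p$ and $\barbelow p$ term by term. The key monotonicity input is intuitive and should fall out of the algebra: raising $p$ (the probability that an incorrect ``$0$''-sample is correctly rejected) can only help a correct sample win the argmax. I expect each $g_k(p,q)$ to be nondecreasing in $p$, with \emph{strict} increase for at least one value of $k$ that occurs with positive binomial probability (for instance $k=1$: exactly one correct sample, which wins iff that sample beats all $N-1$ incorrect ones, a probability that genuinely depends on $p$ when $N\ge2$). Summing, $\pi^N(1\mid\vx)$ is strictly increasing in $p$, which yields the claimed strict inequality for every $N\ge2$. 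The hypotheses $p>1-p$ and $q>1-q$ are used to ensure we are on the branch where the inequalities are strict rather than degenerate (and to keep the tie-breaking terms from cancelling the gain).

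The main obstacle I anticipate is bookkeeping around ties — the events where the top verifier score is shared by both a correct and an incorrect sample — since the tie-breaking rule affects the exact polynomial and one must check that the strict monotonicity survives whatever convention \citet{TheoreticalBoN} use. I would handle this by either (i) importing the exact closed form for $\pi^N(1\mid\vx)$ from that reference and then just verifying the sign of its $p$-derivative, or (ii) if a clean closed form is unavailable, grouping samples by score and writing $\pi^N(1\mid\vx)$ as an expectation over $(K,\text{scores})$, then using a coupling argument: increase $p$ from $\barbelow p$ to $\bar p$ by, for each incorrect sample, possibly flipping its score from $1$ to $0$ independently; such a flip never turns a round won by a $1$ into a round won by a $0$, and strictly helps on a positive-probability event, so the coupling shows the probability strictly increases. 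The coupling version is robust to the tie-breaking convention and is the approach I would ultimately write up if the bookkeeping in (i) gets messy.
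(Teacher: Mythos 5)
Your plan is sound, and in its route (ii) it is genuinely different from the paper's argument. The paper never conditions on $K$: it conditions on the single event $A=\{\text{at least one sample receives verifier score }1\}$, uses exchangeability to reduce the selection step to the conditional label probabilities $P(y=1\mid r=1,\vx)=q/(q+1-p)$ and $P(y=1\mid r=0,\vx)=(1-q)/(p+1-q)$, and obtains the two-term closed form
\begin{equation*}
\pi^N(1\mid\vx)=\Bigl(1-\bigl(\tfrac{1+p-q}{2}\bigr)^N\Bigr)\frac{q}{q+1-p}+\bigl(\tfrac{1+p-q}{2}\bigr)^N\frac{1-q}{p+1-q}\,,
\end{equation*}
then sets $\Delta=p-q$ with $q$ held fixed and shows $f'(\Delta)>0$ by splitting the derivative into two manifestly positive pieces. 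This is far less bookkeeping than your binomial-over-$K$ expansion, precisely because the tie-breaking convention (uniform among the score-$1$ samples if any exist, otherwise uniform among all $N$) lets everything collapse into conditional label probabilities given the score. Your coupling argument in route (ii) is a different and arguably more robust proof: realizing each $0$-labeled sample's score via a uniform threshold so that raising $p$ only flips such scores from $1$ to $0$, one checks that each flip never decreases, and on a positive-probability event strictly increases, the chance that a $1$-labeled sample tops the argmax; this yields strict monotonicity with no closed form and no differentiation, and it is insensitive to the tie-breaking convention. Either route proves the theorem.

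One caveat on your route (i): the event description ``the output is a $1$ unless all $k$ correct samples score $0$ while at least one incorrect sample scores $1$'' is not correct under the uniform-among-argmax convention the paper uses --- if a correct and an incorrect sample both score $1$, the output is $1$ only with the corresponding conditional probability, not with certainty --- so your polynomials $g_k$ would need to account for every mixed-label top-score configuration, not only the all-scores-equal tie mentioned in your parenthetical. You flag tie bookkeeping as the main obstacle, and both of your proposed fixes (importing the exact closed form, or falling back to the coupling) resolve it; just be aware that the correction is needed whenever the argmax set mixes labels, which is the generic case rather than a boundary one.
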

\begin{proof}[Proof sketch] By the law of total probability, we get $\pi^N(1\mid\vx)$. Then, we can get the monotonicity of $\pi^N(1\mid\vx)$. Refer to \autoref{appendix:thm3_proof} for the detailed proof.
\end{proof}
If tool-based verification function $f$ in~\autoref{sec:tool_verification} effectively acts as a filter for incorrect solutions, we can say that using $f$ increases $p$, thus improving the verifier's capability to choose the correct label as shown in \autoref{thm:monotonicity}.

\section{Experiments}

\subsection{Setup}
\paragraph{Datasets}
We mainly focus on \textbf{mathematical reasoning} task due to its widespread adoption and strong effectiveness in assessing the reasoning capabilities of language models~\citep{ScalingTestTime, 1BSurpass}.
We use (1) \textbf{MATH500}~\citep{MATH, VerifyStepbyStep}, a dataset containing college-level math problems. (2) \textbf{GSM8K}~\citep{GSM8K}, which consists of grade-school math problems. 
We use the training set of each dataset for distillation.
We also include additional experimental results on the subset of MMLU-Pro~\citep{MMLU-Pro} in~\autoref{sec:appendix:additional_results}, which contain a multi-domain knowledge-intensive problems.
\vspace{-0.1in}
\paragraph{Evaluation setting}
Following previous works~\citep{GSM8K, VerifyStepbyStep, ScalingTestTime, 1BSurpass}, we evaluate \textbf{weighted Best-of-N} performance, where we aggregate the score of the solutions ending with the same final answer, to assess test-time compute scaling.
We generate 64 solutions using a fixed generator and measure the percentage of correctly solved problems after verifications.
As a verifier, we use both PRM~\citep{MathShepherd} and GenRM-CoT~\citep{GenRM} (we refer to it as GenRM).
\vspace{-0.1in}
\paragraph{Baselines}
We compare ours, \textbf{Tool-integrated verification}, that utilizes both fine-tuned reward model and tool-based verifiers (\textbf{ToolV}), against the following baselines:
(1) \textbf{Majority Voting}~\citep{self-consistency} (without using verifier),
(2) \textbf{Zero-shot GenRM}~\citep{llmjudge, mindthegap} (without any fine-tuning), 
(3) \textbf{Distilled PRM/GenRM} (with fine-tuning),
(4) \textbf{Themis}~\citep{ToolRM}.
\vspace{-0.1in}
\paragraph{Models \& training}
We experiment with the smallest instruction-tuned models from widely used families: Qwen-2.5-0.5B-Instruct~\citep{Qwen2.5} and Llama-3.2-1B-Instruct~\citep{Llama3}.
In addition, we test SmolLM2-360M-Instruct~\citep{smollm2} for evaluation in extremely small model.
As the teacher model, we employ gpt-4o-mini-2024-07-18~\citep{GPT4o}. 
The teacher model is prompted to generate outputs used to fine-tune student models~\citep{LoRA}. 
For PRM’s teacher, we use Qwen2.5-Math-PRM-7B~\citep{PRMLesson}.

We include more implementation details in~\autoref{appendix:detail-exp}.

\begin{figure*}
    \centering
    \includegraphics[width=\linewidth]{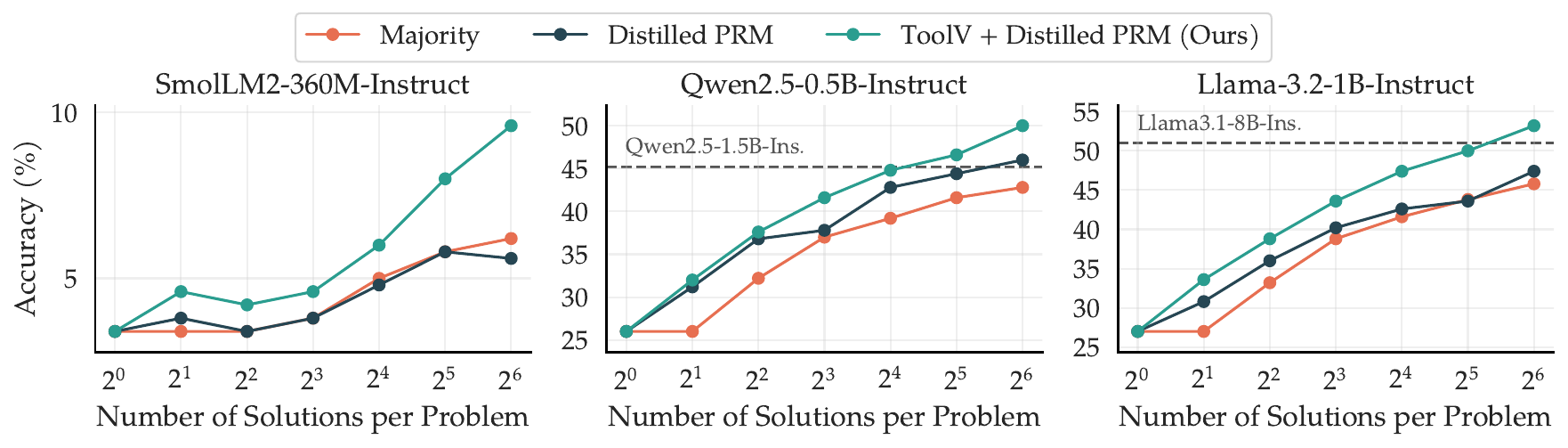}
    \vspace{-0.25in}
    \caption{\textbf{MATH500 with PRM.} Weighted Best-of-N performance of three small language models, emphasizing the benefits of ToolV on college-level math problems. ToolV significantly enhances PRM, enabling small models to outperform or match much larger models. Qwen2.5-1.5B and Llama3.1-8B performances are reported as $N=1$ greedy decoding.} 
    \label{fig:main-prm-math}
    \vspace{-0.15in}
\end{figure*}

\subsection{Experimental results}
\paragraph{ToolV improves PRM in small LMs}
As shown in~\autoref{fig:main-prm-math}, ToolV improves performance when combined with the distilled Process Reward Model (PRM) on the MATH500 benchmark.
Our results show that adding ToolV provides substantial gains in test-time scaling, suggesting that distilled PRM alone is still prone to numerical errors. Notably, with ToolV, \textbf{only using Llama 1B models outperforms the performance of the 8B model}—demonstrating that extra test-time computation can meaningfully boost smaller models, where distilled PRM alone cannot enable the 1B model to reach that performance until generating 64 solutions.
Similarly, ToolV enables Qwen2.5 0.5B to match the performance of the 1.5B model by generating just 16 solutions, showing impressive effectiveness.

\begin{figure*}
    \centering
    \includegraphics[width=\linewidth]{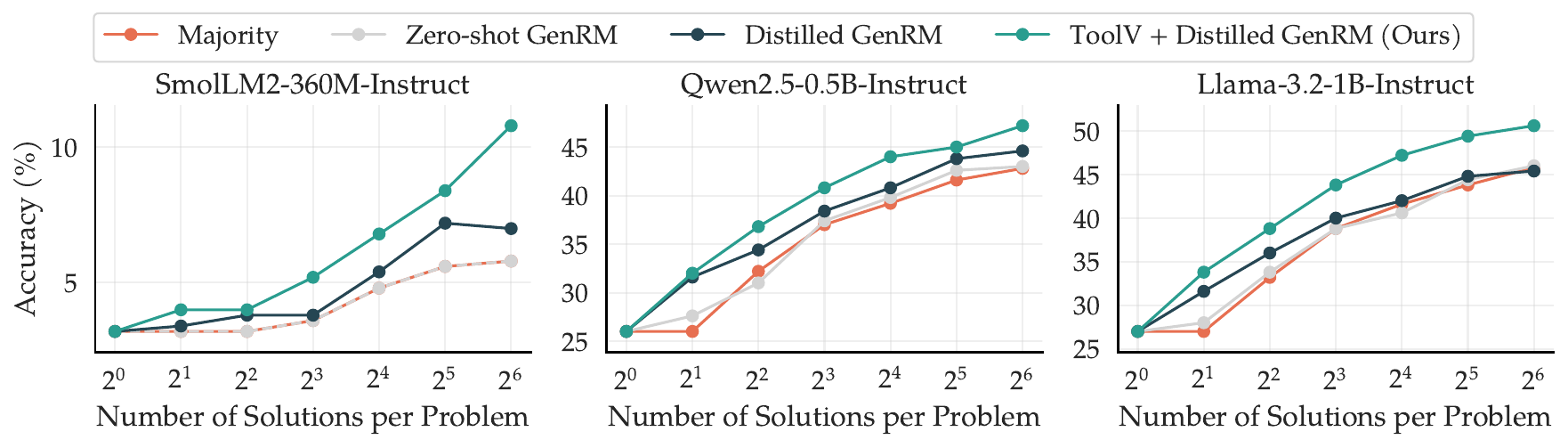}
    \vspace{-0.25in}
    \caption{\textbf{MATH500 with GenRM.} Weighted Best-of-N performance of three small language models, showcasing the effectiveness of ToolV with GenRM, where even generative verification cannot supplement the calculation error which can be easily filtered out by using a tool.} 
    \label{fig:main-math}
    \vspace{-0.15in}
\end{figure*}
\begin{figure*}[t]
    \centering
    \includegraphics[width=\linewidth]{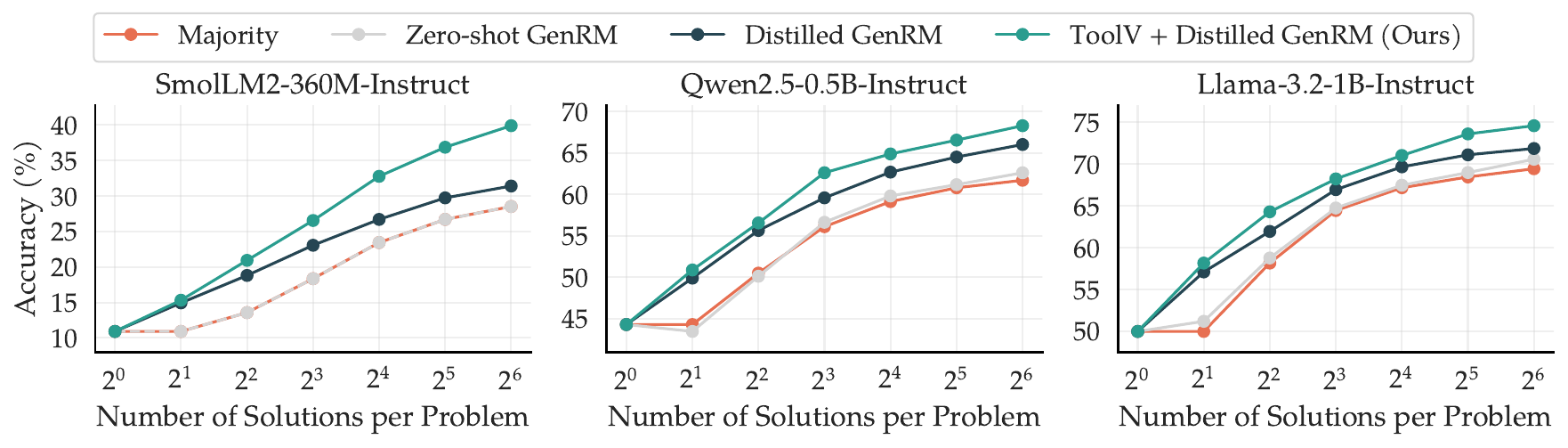}
    \vspace{-0.15in}
    \caption{\textbf{GSM8K with GenRM.} Weighted Best-of-N performance comparison across three small language models. The results show that ToolV also improves model performance on graduate-level arithmetic problems. However, the gains are smaller on this simpler task, where existing verifiers already perform reliably compared to more challenging tasks.} 
    \label{fig:main-gsm8k}
    \vspace{-0.1in}
\end{figure*}

\paragraph{ToolV improves GenRM in sLMs}
As shown in~\autoref{fig:main-math}, ToolV boosts test-time scaling for three small language models on MATH500 when combined with the distilled GenRM~\citep{GenRM}. While GenRM struggles alone, ToolV compensates—at the cost of code generation.
Similar gains appear on GSM8K in~\autoref{fig:main-gsm8k}, especially for SmolLM2-360M-Instruct, the weakest model. This supports our analysis in \autoref{sec:theoretical-memorization} that ToolV enables even small models to memorize key information. Zero-shot GenRM ablations confirm that without distillation, small models struggle to verify solutions~\citep{mindthegap}.

\begin{figure*}[t]
    \centering
    \includegraphics[width=\linewidth]{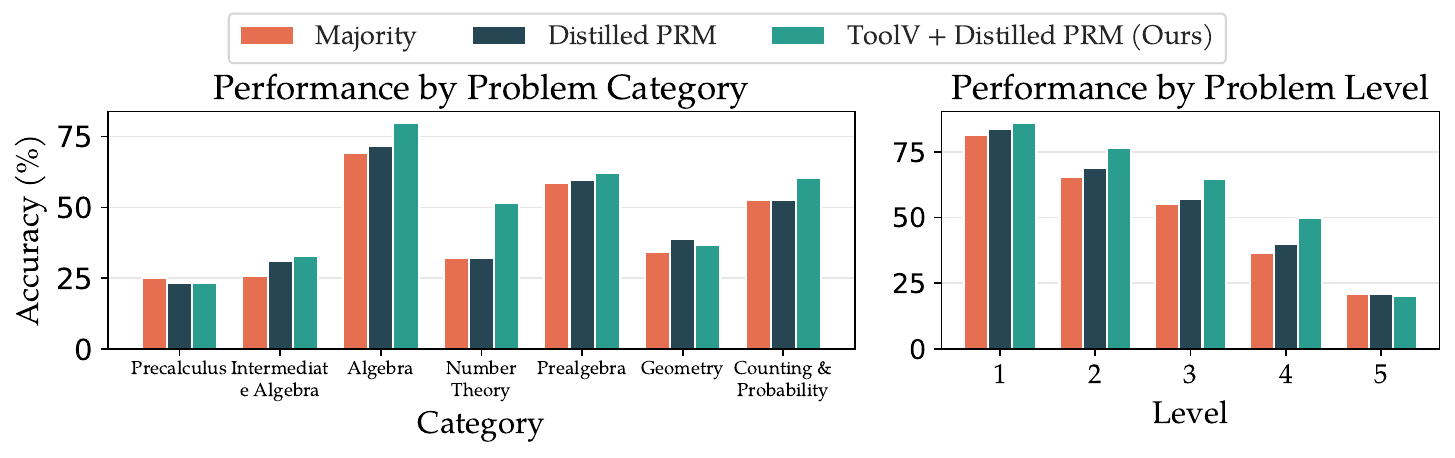}
    \vspace{-0.23in}
    \caption{\textbf{Analysis with problem types and levels.} We perform analysis on the effect of tool-based verifier with problem types and levels in \textbf{MATH500} dataset. The results are from Llama-3.2-1B-Instruct with PRM using weighted Best-of-N ($N=64$). This analysis shows ToolV is most effective on mid-level problems and calculational domains.} 
    \vspace{-0.16in}
    \label{fig:analysis-category}
\end{figure*}

\begin{figure*}[t]
    \centering
    \begin{minipage}[t]{0.645\linewidth}
        \centering
        \includegraphics[width=\linewidth]{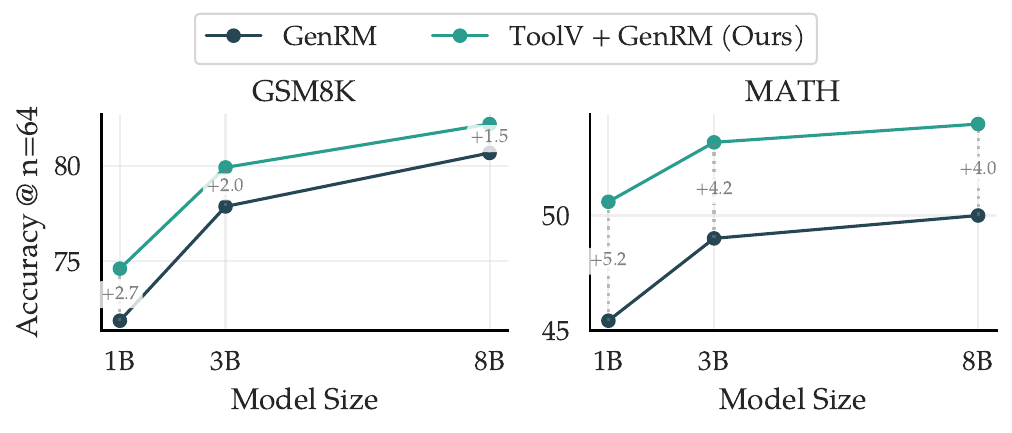}
        \vspace{-0.275in}
        \captionof{figure}{\textbf{Effects of ToolV on sizes of GenRM.} Weighted Best-of-N ($N=64$) performance of GenRM based on different sizes of Llama 3~\citep{Llama3} on \textbf{MATH500}. For ToolV, we use 1B and only scale up the GenRM.}
        \label{fig:analysis-genrm-size}
    \end{minipage}
    \hspace{0.02\linewidth}
    \begin{minipage}[t]{0.315\linewidth}
        \centering
        \includegraphics[width=\linewidth]{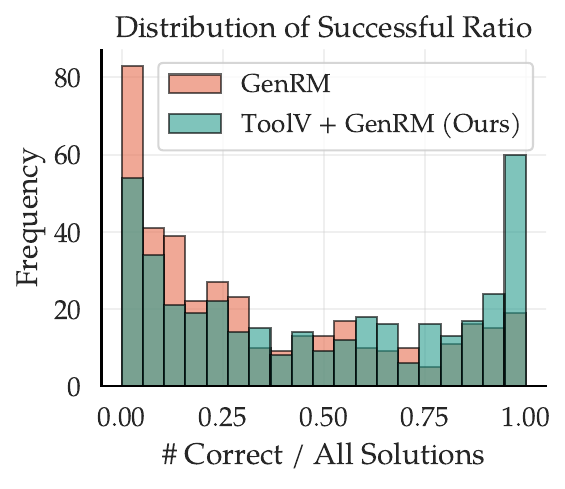}
        \vspace{-0.275in}
        \captionof{figure}{\textbf{Correct solutions ratio} among $N=64$ generations to show how the tool-based verifier works.}
        \label{fig:analysis-correct-ratio}
    \end{minipage}
    \vspace{-0.15in}
\end{figure*}

\paragraph{Our two-stage method outperforms prior tool-integrated verification.}
\begin{wrapfigure}{r}{0.40\textwidth} 
  \centering
  \includegraphics[width=0.38\textwidth]{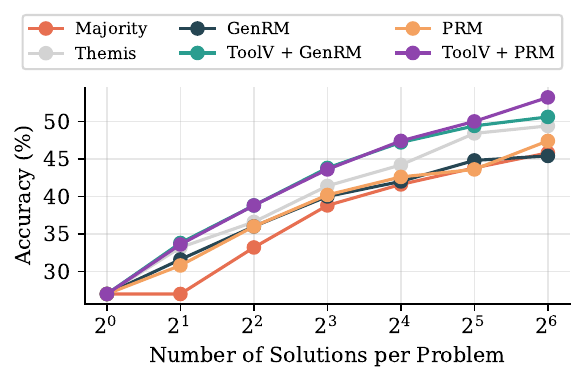}
  \vspace{-0.1in}
  \caption{\textbf{MATH with Themis as baseline.} Weighted Best-of-N performance of Llama-3.2-1B-Instruct model. ToolV outperforms Themis~\citep{ToolRM}.}
  \label{fig:themis}
\end{wrapfigure}
Themis~\citep{ToolRM} demonstrates that integrating external tools can enhance verification performance of 7B models across tasks requiring tools such as calculator, weather, or calendar. 
In contrast, our focus is on mathematical reasoning benchmarks (e.g., GSM8K, MATH500) and much smaller models that were not addressed in~\citet{ToolRM}. 
However, it is notable that our \emph{two-stage} tool-integrated verification approach proves effective for small models on math reasoning, whereas \citet{ToolRM} explored a unified tool-augmented reward modeling framework.

Similar to our method, we first generate tool-integrated verification trajectories using gpt-4o and distill them into the Llama-3.2-1B-Instruct model. 
These trajectories are similar to GenRM but explicitly include intermediate Python code generation~\citep{ToRA}. 
As shown in~\autoref{fig:themis}, Themis~\citep{ToolRM} surpasses other distilled GenRM and PRM baselines without tool usage. 
However, both ToolV + GenRM and ToolV + PRM outperform Themis, indicating that our two-stage approach is better suited for test-time scaling on math reasoning with small models, as it can be combined with distilled PRM and ensures performance improvements as analyzed in~\autoref{sec:theoretical-tov-filter}.

\subsection{Analysis}
\paragraph{Effects of ToolV on category and difficulty}
In~\autoref{fig:analysis-category}, we analyze the $N=64$ weighted Best-of-N performance on MATH500 using Llama-3.2-1B-Instruct.
On the left, category-wise results show ToolV brings clear gains, especially in Algebra, Number Theory, and Counting \& Probability. Geometry sees a drop, likely due to ToolV being less effective in that domain.
On the right, performance by problem level shows consistent improvements with ToolV for Levels 2–4. However, results dip at Level 5, suggesting ToolV struggles with the most challenging problems.
Overall, ToolV works best on mid-level problems and math areas requiring accurate calculation, but improvements are needed for higher-difficulty cases.

\paragraph{ToolV benefits larger verifiers}
\autoref{fig:analysis-genrm-size} shows how performance varies with distilled GenRM size, keeping ToolV fixed at 1B. As GenRM scales, the gap with and without ToolV narrows but remains.
Notably, on MATH500, \textbf{1B GenRM + ToolV outperforms 8B GenRM}, suggesting ToolV can be more effective than scaling the verifier—especially on harder tasks.

\paragraph{Effects of ToolV as filter}
\autoref{fig:analysis-correct-ratio} shows how ToolV acts as an effective filter for mathematical solutions. Using Llama-3.2-1B-Instruct with GenRM on MATH500 ($N=64$ candidates per sample), we recalculated accuracy after applying ToolV to remove wrong outputs. The results support our analysis in \autoref{sec:theoretical-tov-filter}, showing ToolV reliably filters out incorrect solutions and significantly improves accuracy.

\begin{figure*}
    \centering
    \includegraphics[width=\linewidth]{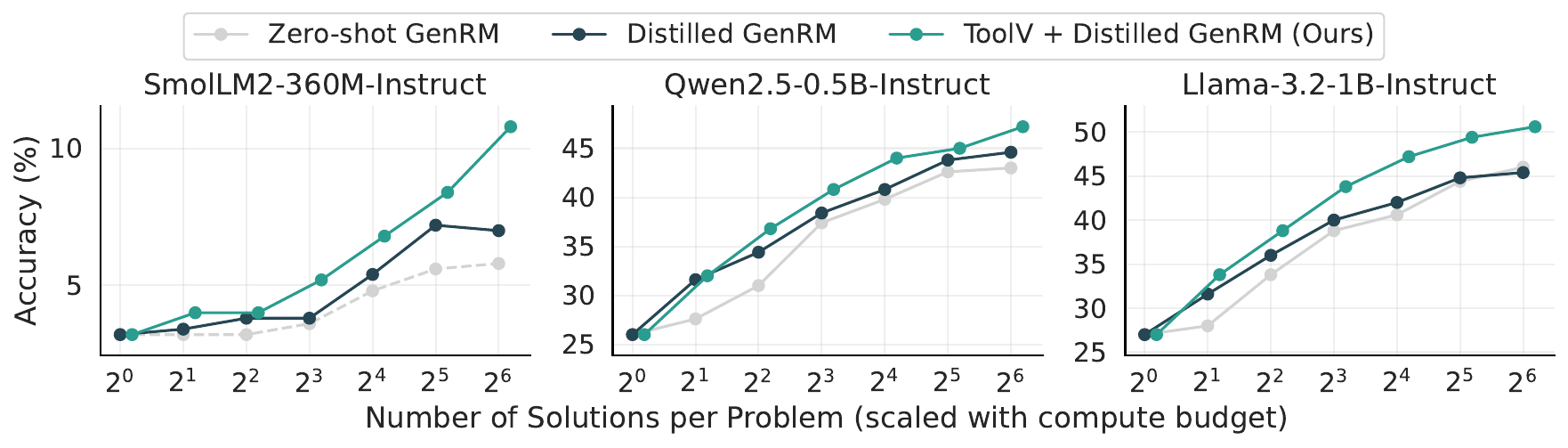}
    \vspace{-0.25in}
    \caption{\textbf{MATH500 with GenRM under a scaled x-axis reflecting compute budget.} Our method remains the best even when compute budget is taken into account.} 
    \label{fig:main-math-scaled}
    \vspace{-0.15in}
\end{figure*}

\begin{wraptable}{r}{0.35\linewidth}
\centering
\vspace{-10pt}
\begin{tabular}{l c}
\toprule
\textbf{Stage} & \textbf{Tokens} \\
\midrule
Solution (Generator) & 574.39 \\
Verification (GenRM)  & 4431.11 \\
Code (ToolV) & 610.84 \\
\bottomrule
\end{tabular}
\vspace{-5pt}
\caption{Token usage per stage.}
\label{tab:token-usage}
\vspace{-10pt}
\end{wraptable}

\subsection{Discussion: Computational overhead of ToolV}
\label{sec:discussion}
ToolV requires generating executable code in addition to producing solutions from the generator and verifier outputs such as GenRM. Understanding the compute budget is therefore important to assess the effectiveness of each method and to determine which approach is preferable under constrained compute resources~\citep{GenRMCompute}. We begin by analyzing the number of generated tokens required by the generator, the verifier (GenRM), and ToolV on the MATH benchmark using Llama-3.2-1B-Instruct. \autoref{tab:token-usage} reports the average token count per solution.

As shown, ToolV introduces extra code generation tokens. This raises two natural questions:
(1) How can we ensure that the performance improvement comes from ToolV itself, rather than from increased compute? If additional budget is available, why not simply use a larger verifier?
(2) Under an equal compute budget, does ToolV still provide benefits compared to using GenRM or PRM alone?
We address these questions below.

\paragraph{Using ToolV is more beneficial than increasing verifier size.}

ToolV enables the effective use of small models, which offers a practical advantage over larger verifiers in terms of GPU memory requirements. Our framework allows small models to function as strong verifiers, which is particularly valuable in memory-constrained environments such as on-device or limited GPU setups.

Even when accounting for the extra compute, the cost of ToolV corresponds to using only a 1.14 times larger verifier. Using~\autoref{tab:token-usage}, let $k$ denote the relative scale factor. Then, $k = (5616.34 - 574.39) / 4431.11 = 1.14$. 
Since ToolV 1B + GenRM 1B surpasses GenRM 8B on MATH (\autoref{fig:analysis-genrm-size}), the performance gain from ToolV more than justifies this small additional cost, especially when compared with simply scaling up the verifier.

\paragraph{ToolV still improves performance under the same compute budget.}

To account for the overhead of ToolV, we shift the x axis in performance plots such as~\autoref{fig:main-math}. Applying the scaling factor of $1.14$, we shift the x-axis of our method with GenRM accordingly. 
The resulting comparison in~\autoref{fig:main-math-scaled} shows that even after budget normalization, ToolV continues to provide meaningful gains, particularly for the smallest model, SmolLM2-360M-Instruct, and in settings with the large number of generated solutions.

Additional discussion and extended experimental results are provided in~\autoref{sec:appendix:additional_results}.
\section{Conclusion}
In this work, we introduced \textbf{Tool-integrated Verification (T1)}, which delegates memorization-intensive tasks in verification to external tools for sLMs.
Our method involves a tool-based verification stage and a reward-model-based scoring stage, both enhanced by knowledge distillation from large verifiers.
Theoretical analysis confirmed that tool use substantially reduces the memorization burden on sLMs and improves test-time scaling accuracy.
Empirical experiments demonstrated that T1 significantly improves the test-time scaling performance of sLM on mathematical reasoning and knowledge-intensive tasks.
A key conclusion of our work is that tool integration is essential for enhancing sLM performance, even under test-time scaling, by reducing the memorization burden.
\paragraph{Limitations \& Future Works}
While T1 shows strong improvements, some limitations remain.
(1) ToolV acts only as a rejection filter and cannot recover from false negatives—correct solutions mistakenly rejected by the verifier. As one possible implementation of T1, this limitation could be mitigated by integrating tool-based reasoning into the verification step, allowing the verifier to leverage correctness guarantees from tool outputs~\citep{ToRA, ToolRM}, which we do not explore in this work.
(2) Our work focuses on best-of-N (parallel) test-time scaling, which lacks information sharing between generations. However, tools can also benefit other test-time scaling strategies, such as step-level search~\citep{ToT} in sLMs or long reasoning chains in sequential test-time scaling as demonstrated by~\citet{START}. Exploring these directions presents a promising avenue for future work.

\section*{Ethics statement}
This work uses only public datasets for math and knowledge tasks and does not involve human subjects or personal data. Tool based verification runs a code interpreter for numeric checks and a retriever over Wikipedia abstracts for factual checks, and these tools do not store user data. The main risks are retrieval errors and code execution failures; we bound both with conservative rules, simple evaluation code, and we discard unverifiable claims. We follow all dataset and model licenses, disclose model families, teacher models, and training compute, and report the resources used.

\section*{Reproducibility statement}
We specify datasets, baselines, metrics, and the evaluation protocol for weighted Best of N with both a process reward model and a generative verifier. The appendix provides implementation details, training setup, key hyperparameters, and the exact prompts for code based math checks and document based fact checks. We state the retriever, source, and document count per query, and we define the rules for code execution and success signals. We will release an anonymous package with scripts, prompts, configuration. Plus, we will provide instructions and seeds to recreate all tables and figures.

\bibliography{iclr2026_conference}
\bibliographystyle{iclr2026_conference}

\appendix
\newpage
\newpage
\appendix

\section{Concept-Proof Experiment Details}
\label{appendix:concpet-proof}
We provide additional details about the proof-of-concept experiment shown in~\autoref{fig:concept} (b) of the main paper. The verification task focuses on arithmetic calculations involving randomly selected $N$ three-digit numbers, using both addition and subtraction, with $N$ ranging from 3 to 10.

For each value of $N$, we generate 500 equations with correct answers, along with another 500 equations where the output is slightly incorrect—within a $5\%$ margin of error.

We then prompt Llama-3.2-1B-Instruct~\citep{Llama3} to verify these calculations. Specifically, we use Prompt~\ref{box:poc_prompt} to make the language model to generate a step-by-step explanation in natural language.
\begin{promptbox}[label={box:poc_prompt}]{Check Calculation}
Evaluate the below calculation. Is this calculation correct? If correct, return True. Return False otherwise.

\vspace{1em}

\# Calculation: \{exp\} = \{ans\}

\vspace{1em}

If the calculation is correct, return True. If not, return False.

\vspace{1em}

Think step-by-step, and MUST output True or False at the end of your verification.
\end{promptbox}

To use the tool, we prompt LM to generate a code instead of verification in natural language using Prompt~\ref{box:poc_prompt2}.
\begin{promptbox}[label={box:poc_prompt2}]{Check Calculation with Code}

Generate a simple Python script that evaluates the correctness of a given mathematical calculation.  

\vspace{1em}

\# Calculation: \{exp\} = \{ans\}  

The script should print `The calculation is correct` if the calculation is correct, otherwise print `The calculation is incorrect`.  

\vspace{1em}

\#\#\# Constraints:

- The output must be a single Python code block without any function definition.

- The script should evaluate the expression as a boolean comparison.  

\vspace{1em}

If the evaluated result of `{exp}` matches `{ans}`, print `The calculation is correct`, otherwise print `The calculation is incorrect`.
\end{promptbox}

\section{Details of Our Method}\label{appendix:detail-method}
The choice of the component of the tool-based verification function depends on the task:
\begin{enumerate}[itemsep=0.8mm, parsep=1pt, leftmargin=*]
    \item \textbf{Mathematical reasoning task: } sLM generates executable programming code $\vc_1$, then the code interpreter executes the code and validates the correctness of computations~\citep{Toolformer, ToRA}.
    Since the code interpreter executes the code as well as outputs the verification score, we can regard the extraction of the verification score part as the identity function, i.e., $\vc_2 = \mathcal{T}(\vc)$. 
    Therefore, $f(\vx, \vy; \mathcal{T},\theta)$ for numerical reasoning tasks can be represented as:
    \begin{equation}
        f(\vx, \vy; \mathcal{T},\theta) = \vc_2 = \mathcal{T}(\vc_1), \quad\text{where}\quad \vc_1 \sim \pi(\vc \mid \vx, \vy, \mI_c; \theta).
        \label{eqn:tool-math}
    \end{equation}
    \item \textbf{Knowledge-intensive task: } The tool $\mathcal{T}$ acts as a retriever that returns a set of relevant knowledge passages $\vk$ based on the input $\vx$ and candidate response $\vy$. 
    Subsequently, sLM verifies the consistency between the retrieved knowledge $\vk$ and the claims within $\vy$.
    Since the retriever utilizes $\vx$ and $\vy$ directly as a query, we can regard the tool-calling query part as the identity function, i.e., $\vc_1=(\vx,\vy)$.
    Therefore, $f(\vx, \vy; \mathcal{T},\theta)$ for knowledge-intensive tasks can be represented as:
    \begin{equation}
        f(\vx, \vy; \mathcal{T},\theta) = \vc_2 \sim \pi\left(\vc \mid \mathcal{T}\left(\vc_1\right), \vx, \vy, \mI_f; \theta\right),  \quad\text{where}\quad \vc_1=(\vx,\vy),
        \label{eqn:tool-retrieve}
    \end{equation}
\end{enumerate}

\section{Proof for Theoretical Analysis}
\subsection{From theoretical analysis to practice.}
Our theoretical analysis illustrates two key ideas: (1) tool integration reduces the memorization burden of small language models, and (2) the two stage design improves test time scaling by enabling a more reliable filtering function. These results are based on simplified and idealized settings and are not intended to capture full practical behavior. Instead, they provide conceptual illustrations grounded in existing theoretical frameworks~\citep{MemorizationCapacity, TheoreticalBoN}. The empirical trends observed in our experiments are consistent with these theoretical intuitions, suggesting that the underlying principles extend to more complex real-world scenarios.

\subsection[Proof of Lemma~\ref{thm:direct-memorization}]%
{Proof of \autorefLemma{thm:direct-memorization}}
\label{appendix:thm1_proof}
\begin{proof}

$\mathcal{X}$ has cardinality $|\mathcal{X}|=2\cdot(M-1)^3=\Theta(M^3)$. Also, Theorem 1.1 in~\citet{MemorizationCapacity} says that any learning algorithms $\gA$ that is $\varepsilon$-close-to-optimal with sufficiently small $\varepsilon>0$ also satisfies that the mutual information between data samples and the model learned by $\gA$ given the data distribution is proportional to at least the number of data samples multiplying dimension of data dimension. Since the number of data samples' cardinality in \autoref{sec:theoretical-memorization} is $\Theta(M^3)$ and the data dimension is $1$, we can state that $I\left(X;\theta|\;P\right) \;=\; \Omega\!\left(M^{3}\right)$. This proof says that if a model directly memorizes which $(a,b,c)$ pairs map to each $c=a+b$ with near-zero error, $\theta$ must encode on the order of $\!M^3$ bits of information about $X$.
\end{proof}
\subsection{Proof of \autoref{thm:code-snippet}} \label{appendix:thm2_proof}
\begin{proof}
In this case, $\gA$ has access to the tool function as follows:
\begin{align*}
   f\left(a,b,c;\mathcal{T}\right) \;=\; \mathbf{1}_{\,a+b \;=\;c\,}.
\end{align*}
Then, $\theta=\gA(X)$ such that $f(a,b,c;\theta,\mathcal{T})=\mathbf{1}_{\,a+b \;=\;c\,}$. 
Regardless of $X$, $f$ can perfectly get label $r$, resulting in $\mathrm{err}_{q,|\mathcal{X}|}(\gA) =0$. 

In addition, since $\theta$ is determined regardless of $X$ through $\gA$, we get
\begin{align*}
    I(X;\theta\mid P) &= H(\theta \mid P) - H(\theta \mid X, P) \\
    &= H(\theta \mid P) - H(\theta \mid P) \\
    &= 0,
\end{align*}
where $H$ is the entropy.
Therefore, we can state that $I\left(X;\theta|\;P\right) \;=\; 0$.
\end{proof}

\subsection{Proof of \autoref{thm:monotonicity}} \label{appendix:thm3_proof}

We first show Best-of-$N$ accuracy first in \autorefLemma{lemma:bom_w_verifier}. Then we show the monotonicity with respect to the $p$.

\begin{lemma}[Best-of-$N$ Accuracy with Imperfect Verifier]\label{lemma:bom_w_verifier}
Let the generator output $0$ or $1$ with equal probability, i.e.,
\begin{equation*}
\pi\left(0|\vx\right) = \pi\left(1|\vx\right) = \frac{1}{2},
\end{equation*}
and let the verifier $r$ with noise level $p, q$ be defined as follows:
\begin{equation*}
r_{p, q}\left(\vx,0\right) = \begin{cases} 
0, & \text{w.p. } p, \\[1mm]
1, & \text{w.p. } 1-p,
\end{cases}
\quad
r_{p, q}\left(\vx,1\right) = \begin{cases} 
1, & \text{w.p. } q, \\[1mm]
0, & \text{w.p. } 1-q,
\end{cases}
\end{equation*}
with the condition that $q > 1-q$ and $p > 1-p$. 
Then, for $N \ge 1$, the probability that the best-of-$N$ output is ground truth label, i.e. $1$, is given by
\begin{equation}
\pi^N\left(1\mid \vx\right) \;=\; \frac{q}{q+1-p}\,\left[1-\left(\frac{1-q+p}{2}\right)^N\right] \;+\; \frac{1-q}{1-q+p}\,\left(\frac{1-q+p}{2}\right)^N\,.
\end{equation}
\end{lemma}

\begin{proof}
A single sample from the generator $\pi$ is labeled \(1\) with probability \(\frac{1}{2}\) and \(0\) with probability \(\frac{1}{2}\). 
Given the verifier $r$ with noise level $p$ and $q$, the joint probability are:
\begin{equation}
\begin{aligned}
P\left(y=1, r=1 \mid \vx \right) &= \frac{1}{2}\,q,\\
P\left(y=0, r=1 \mid \vx \right) &= \frac{1}{2}\,(1-p),\\
P\left(y=1, r=0 \mid \vx \right) &= \frac{1}{2}\,(1-q),\\
P\left(y=0, r=0 \mid \vx \right)  &= \frac{1}{2}\,p.
\end{aligned}
\label{eq:joint_probability}
\end{equation}

Then, the probability that a single sample yields a verifier score of 1 and 0 are

\begin{align*}
    p\left(r=1 \mid \vx \right) &= \sum_{i\in\{0, 1\}} p\left(y=i, r=1 \mid \vx \right) = \frac{1}{2}\,\left(1-p\right) + \frac{1}{2}\,\left(q\right), \\
    p\left(r=0 \mid \vx \right) &= 1- p\left(r=1 \mid \vx \right) = \frac{1+p-q}{2}, \\
\end{align*}
respectively. 

Define the event
\begin{align*}
A = \Bigl\{ \text{at least one Best-of-$N$ sample is } r(\mathbf{x},y)=1 \Bigr\}.
\end{align*}

Then, from \autoref{eq:joint_probability}, the probability that all \(N\) candidates yield \(r=0\) is
\begin{equation}
P(A^c)=P(r=0)^N=\left(\frac{1+p-q}{2}\right)^N,
\label{eq:a_probability}
\end{equation}
and consequently,
\begin{equation}
P(A)=1-P(A^c) = 1-\left(\frac{1+p-q}{2}\right)^N.
\label{eq:a_c_probability}
\end{equation}

Consider two cases:

\textbf{Case 1.} \textit{At least one candidate yields \(r=1\) (event \(A\) occurs).}  
In this case, the final output is chosen uniformly among the candidates with \(r=1\).
For any candidate with \(r=1\), the probability that it originated from \(y=1\) is given by
\begin{align*}
P\bigl(y=1 \mid r=1, \vx\bigr)
=\frac{P(y=1, r=1 \mid \vx)}{P(r=1 \mid \vx)}
=\frac{\frac{1}{2}q}{\frac{1}{2}(q+1-p)}
=\frac{q}{q+1-p}.
\end{align*}

\textbf{Case 2.} \textit{All candidates yield \(r=0\) (event \(A^c\) occurs).}  
In this case, the output is chosen uniformly among all \(N\) candidates. For a candidate with \(r=0\), the probability that it is \(1\) is
\begin{align*}
P\bigl(y=1 \mid r=0, \vx\bigr)
=\frac{P(y=1, r=0 \mid \vx)}{P(r=0 \mid \vx)}
=\frac{\frac{1}{2}(1-q)}{\frac{1}{2}(p+1-q)}
=\frac{1-q}{p+1-q}.
\end{align*}

By the law of total probability, the overall probability that the Best-of-\(N\) output is \(1\) is
\begin{align*}
\pi^N(1\mid \mathbf{x})
&=P(A)\cdot P\bigl(y=1 \mid r=1, \vx\bigr) + P(A^c)\cdot P\bigl(y=1 \mid r=0, \vx\bigr) \\
&=P(A)\cdot\frac{q}{q+1-p} + P(A^c)\cdot\frac{1-q}{p+1-q}.
\end{align*}
From \autoref{eq:a_probability} and \autoref{eq:a_c_probability}, we have
\begin{align}
\pi^N(1\mid \mathbf{x})
=\left(1-\left(\frac{1+p-q}{2}\right)^N\right)\frac{q}{q+1-p}
+\left(\frac{1+p-q}{2}\right)^N\frac{1-q}{p+1-q}.
\label{eq:thm1_main}
\end{align}
\end{proof}

Using \autorefLemma{lemma:bom_w_verifier}, \autoref{thm:monotonicity} is proven as follows:

\begin{proof} Define the difference $\Delta = p - q,$
so that
\begin{align*}
\frac{1+p-q}{2} = \frac{1+\Delta}{2} \,,
\end{align*}
and note that
\begin{align*}
q+1-p = 1 - \Delta \quad \text{and} \quad p+1-q = 1+\Delta\,.
\end{align*}
Then, from ~\autorefLemma{lemma:bom_w_verifier}, \autoref{eq:thm1_main} is rewritten as
\begin{equation}\label{eq:f}
f(\Delta) \;=\; \left(1-\left(\frac{1+\Delta}{2}\right)^N\right)\frac{q}{1-\Delta}
+\left(\frac{1+\Delta}{2}\right)^N\frac{1-q}{1+\Delta}\,.
\end{equation}
Since \(q\) is held fixed, an increase in \(p\) corresponds to an increase in \(\Delta\).

Define
\begin{align*}
A(\Delta) = \left(\frac{1+\Delta}{2}\right)^N,
\end{align*}
so that
\begin{align*}
f(\Delta) = \bigl[1-A(\Delta)\bigr]\,\frac{q}{1-\Delta} + A(\Delta)\,\frac{1-q}{1+\Delta}\,.
\end{align*}
Differentiating \eqref{eq:f} with respect to \(\Delta\) yields
\begin{align}
f'(\Delta) = -A'(\Delta)\frac{q}{1-\Delta} + \left[1-A(\Delta)\right]\frac{q}{(1-\Delta)^2} 
+ A'(\Delta)\frac{1-q}{1+\Delta} - A(\Delta)\frac{1-q}{(1+\Delta)^2}\,,
\label{eq:f_differentiate}
\end{align}
with
\[
A'(\Delta) = \frac{N}{2}\left(\frac{1+\Delta}{2}\right)^{N-1}\,.
\]

Reformulating \autoref{eq:f_differentiate} results in
\[
f'(\Delta)
\;=\;
\frac{q}{(1-\Delta)^2}\Bigl[\,\underbrace{1 - A(\Delta)
\;-\;
\bigl(1-\Delta\bigr)\,A'(\Delta)}_{(a)}\Bigr]
\;+\;
\frac{1 - q}{(1+\Delta)^2}\Bigl[\underbrace{\bigl(1+\Delta\bigr)\,A'(\Delta)
\;-\;
A(\Delta)}_{(b)}\Bigr].
\]
\begin{enumerate}[label=(\alph*)]
\item Define $x = \tfrac{1+\Delta}{2}$.
Then,
\begin{align*}
1 - A(\Delta)
\;-\;
\bigl(1-\Delta\bigr)\,A'(\Delta) &= 1 - x^N \;-\; N\,(1-x)\,x^{N-1}.
\end{align*}
Set $g(x)=1 - x^N \;-\; N\,(1-x)\,x^{N-1}$. Then, $g(0)=1$, $g(1)=0$, and $g'(x) \le 0$ induces $g(x)>= 0$ for $x\in[0, 1]$. Since $q1 - q2 \in [-1,1]$, $x\in[0, 1]$ is satisfied. Therefore, we have
\[1 - A(\Delta)-\bigl(1-\Delta\bigr)\,A'(\Delta) > 0.\]

\item Since $ A'(\Delta) = \frac{N}{2}\biggl(\frac{1+\Delta}{2}\biggr)^{N-1} \text{and} \quad A(\Delta) = \biggl(\frac{1+\Delta}{2}\biggr)^{N}$,
we have
\begin{align*}
(1+\Delta)\,A'(\Delta) &= (1+\Delta)\,\frac{N}{2}\biggl(\frac{1+\Delta}{2}\biggr)^{N-1}\\
&= N\biggl(\frac{1+\Delta}{2}\biggr)^{N} \\
&=N \, A(\Delta).
\end{align*}
Hence, we have
\begin{align*}(1+\Delta)\,A'(\Delta)\;-\;A(\Delta)&= N\,A(\Delta)\;-\;A(\Delta) \\
&=(N-1)\,A(\Delta).
\end{align*}
For $N \ge 2$ and $\Delta > -1$, both $N-1>0$ and $A(\Delta)>0$. Therefore, 
\[\bigl(1+\Delta\bigr)\,A'(\Delta)\;-\;A(\Delta) > 0.\]
\end{enumerate}

Since (a) and (b) are non-negative, we conclude that $f'(\Delta) > 0$ for all $\Delta$ and $N \ge 2$.

Thus, for $N \ge 2$, if \(\bar{p} > \barbelow{p}\) (i.e. \(\Delta^1 > \Delta^2\)), it follows that
\[
f(\Delta^1) > f(\Delta^2)\,,
\]
or equivalently,
\[
\pi^N(1\mid \mathbf{x})\Big|_{p=\bar{p}} \;>\; \pi^N(1\mid \mathbf{x})\Big|_{p=\barbelow{p}}\,.
\]
\end{proof}

\section{Implementation Details}
\label{appendix:detail-exp}
\subsection{Model}
We use \href{https://huggingface.co/meta-llama/Meta-Llama-3-1B-Instruct}{LLaMA-3.2-1B-Instruct}~\citep{Llama3}, \href{https://huggingface.co/Qwen/Qwen2-0.5B-Instruct}{Qwen-2.5-0.5B-Instruct}~\citep{Qwen2.5}, \href{https://huggingface.co/declare-lab/SmolLM2-360M-Instruct}{SmolLM2-360M-Instruct}~\citep{smollm2} as base models for our experiments.

\subsection{Training}
\paragraph{Hyperparameters \& Setting}
As mentioned in~\autoref{sec:distillation}, we fine-tune small language models (sLMs) for each module using LoRA~\citep{LoRA}.
However, for PRM, we fine-tune the full model including the classifier head following~\citet{MathShepherd}.
Only for SmolLM2-360M-Instruct, we fine-tune the model on generation as it achieves under 10$\%$ accuracy on both GSM8K and MATH.
We organize the hyperparameter details in~\autoref{appendix:tab:hyperparams}.
We use 4 A100 40GB GPUs with FSDP~\citep{FSDP} for training.
\begin{table}[h]
\centering
\caption{Hyperparameters used in fine-tuning sLM for each component.}
\label{appendix:tab:hyperparams}
\begin{tabular}{lccc}
\toprule
\textbf{Hyperparameter} & \textbf{Verifier} & \textbf{PRM} & \textbf{ToolV} \\
\midrule
Learning rate          & $1 \times 10^{-4}$ & $1 \times 10^{-5}$ & $1 \times 10^{-4}$ \\
Batch size             & 16                 & 16               & 16 \\
Max length             & 2048               & 2048             & 2048 \\
LoRA rank              & 64                 & -                & 64 \\
LoRA $\alpha$          & 128                & -                & 128 \\
Optimizer              & AdamW              & AdamW            & AdamW \\
Training epochs        & 1                  & 3                & 3   \\
Scheduler              & Linear             & Linear           & Linear \\
\bottomrule
\end{tabular}
\end{table}

\paragraph{Dataset for Distillation}
We perform distillation using the training split of each dataset. For MMLU-Pro, we adopt the train-test split provided by~\citet{VersaPRM}. Training dataset sizes are 7473 for GSM8K, 7500 for MATH, 1284 for MMLU-Pro. 

During distillation, we prompt the teacher model—gpt-4o-mini-2024-07-18 in our experiments—to generate sequences used as supervision for training. For the generative verifier, we follow the prompt design from~\citet{GenRM}. Specifically, we generate 8 completions per problem using a temperature of 0.6, and treat these outputs as the training data.

For code generation tasks, we apply the Prompt~\ref{box:code_prompt}.
Using this prompt, we generate 4 completions per problem at a temperature of 0.6, which are then used as training samples.
For fact-checking in MMLU-Pro, we similarly generate 8 completions per problem with a temperature of 0.6, using the teacher model to construct the training dataset. We use the Prompt~\ref{box:factcheck_prompt}.
In addition, we retrieve 3 documents for fact-checking from wikipedia abstracts using BM25 implemtented in Pyserini~\citep{pyserini}.

\begin{promptbox}[label={box:code_prompt}]{Code Generation}
\texttt{SYSTEM\_PROMPT:}

Write a Python code block that verifies whether a given solution is correct based on the provided question, following these guidelines:

\vspace{1em}

- The code should be a single Python block, formatted as:

```python

CODE

```
\vspace{1em}

- The code should only print True if the solution is verified as correct. Otherwise, it should only print False if the solution is incorrect.

- Use only the following built-in modules where necessary:

\quad- `math` (for floating-point comparisons using math.isclose())
    
\quad- `sympy` (for symbolic calculations, including $\pi$ and fractions)
    
\quad- `cmath` (for complex number operations)

- For floating-point comparisons, use math.isclose() instead of $==$.

- Use `sympy.pi` for $\pi$ and `sympy.Rational` for fractions.

- Simplify all fractions and square roots without converting them to decimal values.

\vspace{1em}

\texttt{USER\_PROMPT:}

\#\#\# Input

- Question: \{question\}

- Solution: \{solution\}

\vspace{1em}

\#\#\# Output:

Return python code only.
\end{promptbox}
\begin{promptbox}[label={box:factcheck_prompt}]{Fact-checking generation}
\texttt{SYSTEM\_PROMPT:}

You are a domain expert.

\vspace{1em}

\texttt{USER\_PROMPT:}

Check the factual correctness of each statement in the provided solution to the question, using only the information available in the given document.

- Evaluate only the explicit factual claims made in the solution. Do not verify or evaluate the final conclusion or answer itself (e.g., The answer is ...).

- If a statement is factually incorrect based on the document, mark it as incorrect.

- If a statement cannot be verified using the document (i.e., the document does not confirm or deny it), treat it as not verifiable, and assume it is correct for the purpose of final verification.

\vspace{1em}

$\langle$question$\rangle$\{question\}$\langle$/question$\rangle$

\vspace{1em}

$\langle$document$\rangle$\{document\}$\langle$/document$\rangle$

\vspace{1em}

$\langle$solution$\rangle$\{solution\}$\langle$/solution$\rangle$

At the end of the fact check, provide a final summary in the following format:  
Verification: Are all statements correct? (Yes/No)? X 
(where X is either Yes or No).

If any verifiably false statement is found, output:
Verification: Are all statements correct? (Yes/No)? No

If no false statements are found (i.e., all are either correct or unverifiable), output:
Verification: Are all statements correct? (Yes/No)? Yes
\end{promptbox}

\subsection{Evaluation}
\paragraph{Hyperparameters \& Setting}
We generate $N = 64$ solutions using a temperature of 0.8.
In the case of GenRM, we follow the chain-of-thought variant proposed by~\citet{GenRM}. As in their setup, we generate $n = 8$ rationales and average the correctness scores across them, following the self-consistency method~\citep{self-consistency}, using a temperature of 0.6.
For PRM, we apply the final score aggregation approach, consistent with previous studies~\citep{MathShepherd, ScalingTestTime}.
When using ToolV for mathematical reasoning, we generate 4 code completions with a temperature of 0.6 and consider the result correct if at least one of the generated codes passes.
In knowledge-intensive reasoning with ToolV, we generate 4 rationales at the same temperature and consider the result correct only if all of them pass.
In the case of MMLU-Pro, we retrieve three documents following the training setup. Gold documents are generated from each question using GPT-4o.

\begin{table}[t]
    \centering
    \caption{Performance comparison between GenRM and ToolV + GenRM. Results are from experiments with Llama-3.2-1B-Instruct on MATH500 benchmark.}
    \vspace{-0.1in}
    \begin{tabular}{lcccc}
        \toprule
        \textbf{Method} & \textbf{Accuracy} & \textbf{Precision} & \textbf{Recall} & \textbf{F1 Score} \\
        \midrule
        GenRM & 80.91\% & 0.6153 & \bf 0.7759 & 0.6863 \\
        GenRM + ToolV & \bf 86.99\% & \bf 0.7666 & 0.7427 & \bf 0.7545 \\
        \bottomrule
    \end{tabular}
    \label{appendix:tab:verifier_accuracy}
\end{table}
\begin{table}[t]
    \centering
    \caption{Performance of LLama-3.2-1B-Instruct on the MATH500 benchmark for Python code generation, using teacher model outputs as reference (gold). We set rejection as positive label for computing precision, recall, and f1 score.}
    \vspace{-0.1in}
    \begin{tabular}{lcccc}
        \toprule
        \textbf{Model Size} & \textbf{Accuracy} & \textbf{Precision} & \textbf{Recall} & \textbf{F1 Score} \\
        \midrule
        1B & 0.7687 & 0.8720 & 0.6946 & 0.7733 \\
        3B & \bf 0.7973 & 0.8949 & \bf 0.7286 & \bf 0.8033 \\
        8B & 0.7906 & \bf 0.9207 & 0.6908 & 0.7893 \\
        \bottomrule
    \end{tabular}
    \label{appendix:tab:codegen_teacher}
\end{table}

\section{Additional experimental results}
\label{sec:appendix:additional_results}

\subsection{ToolV on multi-domain knowledge-intensive tasks}
We demonstrate that ToolV is effective in verifying solutions across a range of knowledge-intensive reasoning tasks from the subset of MMLU-Pro benchmark (Health, Economics, History domains)~\citep{MMLU-Pro}. 
We adapt ToolV to function as a \textit{fact-checker}, verifying claims in solutions without other components such as query transformation and reranker~\citep{SAFE, KARD}.
We provide experimental results on the MMLU-Pro benchmark with minimal framework in this work~\citep{MMLU-Pro}.

As shown in~\autoref{fig:main-mmlu}, ToolV outperforms the distilled PRM baseline derived from the VersaPRM~\citep{VersaPRM}. For the tool, we retrieve three documents from Wikipedia using BM25. Due to variability in document quality, performance is somewhat unstable in some cases. To explore ToolV’s upper bound, we also evaluate it using gold documents generated by GPT-4o. Results show that ToolV performance improves significantly with higher-quality documents, demonstrating its potential for multi-domain knowledge-intensive reasoning.

In~\autoref{appendix:fig:mmlu}, we plot the best-of-N results for all N values used in the experiments from~\autoref{fig:main-mmlu}.
Compared to math reasoning tasks, the plot is less clearly separated. However, ToolV + PRM generally outperforms the other baselines and clearly surpasses them even when using gold documents.

\subsection{Accuracy of generative reward model in verification generation}
In~\autoref{appendix:tab:verifier_accuracy}, we report the accuracy of GenRM with and without ToolV. The results show that ToolV significantly improves accuracy, precision, and F1 score, indicating that it effectively removes false positive cases among the solutions. The confusion matrix in~\autoref{appendix:fig:accuracy} further illustrates this trend. However, it also reveals that ToolV occasionally removes true positives, primarily due to incorrectly generated Python code.

\subsection{Accuracy of tool-based verifier in code generation}
\label{appendix:sec:python_accuracy}
In~\autoref{appendix:tab:codegen_teacher}, we present the accuracy of Python code generation, treating the teacher-generated code as the ground truth. The results show that precision is quite high—even for the 1B model, the distilled 1B ToolV is able to filter out more than 85\% of incorrect solutions among the incorrect solutions that teacher model predicted. However, the recall is low, indicating that the generated code sometimes mistakenly filters out correct solutions. 

\subsection{Data efficiency of RM and tool-based verifier in distillation}
In~\autoref{appendix:fig:datascale}, we conduct experiments to evaluate how much data is needed for each RM-based and tool-based verifier to achieve satisfactory performance. In each plot, we reduce the distillation data to 10\% and 1\% for one verifier, while keeping the other verifier fully distilled using 100\% of the dataset. The results show that ToolV maintains competitive performance even with only 10\% of the data, demonstrating its data efficiency during distillation.

\subsection{Necessity of two-stage design}
A natural question is whether ToolV alone is sufficient for verification with small models. To answer this, we conduct ablation studies that isolate the contribution of each stage. We report results in~\autoref{appendix:tab:ablation_toolv_prm} and~\autoref{appendix:tab:ablation_toolv_genrm}.

ToolV alone is a strong verifier. It already surpasses PRM on MATH500 and GenRM on GSM8K. This shows that executable consistency checks provide an effective signal for filtering incorrect solutions. However, ToolV focuses primarily on execution-based correctness such as calculation. It does not capture higher-level reasoning errors that are not executable, such as misinterpreting problem structure or producing logically inconsistent intermediate steps.

Our ablation results show that this complementary relationship leads to consistent performance gains. ToolV alone improves over PRM and GenRM, but the two stage design (ToolV plus PRM or GenRM) achieves the highest accuracy across both MATH500 and GSM8K. These findings confirm that the two stages provide different verification signals and that both are necessary for strong verification under test time scaling with small models.

\begin{figure*}
    \centering
    \includegraphics[width=\linewidth]{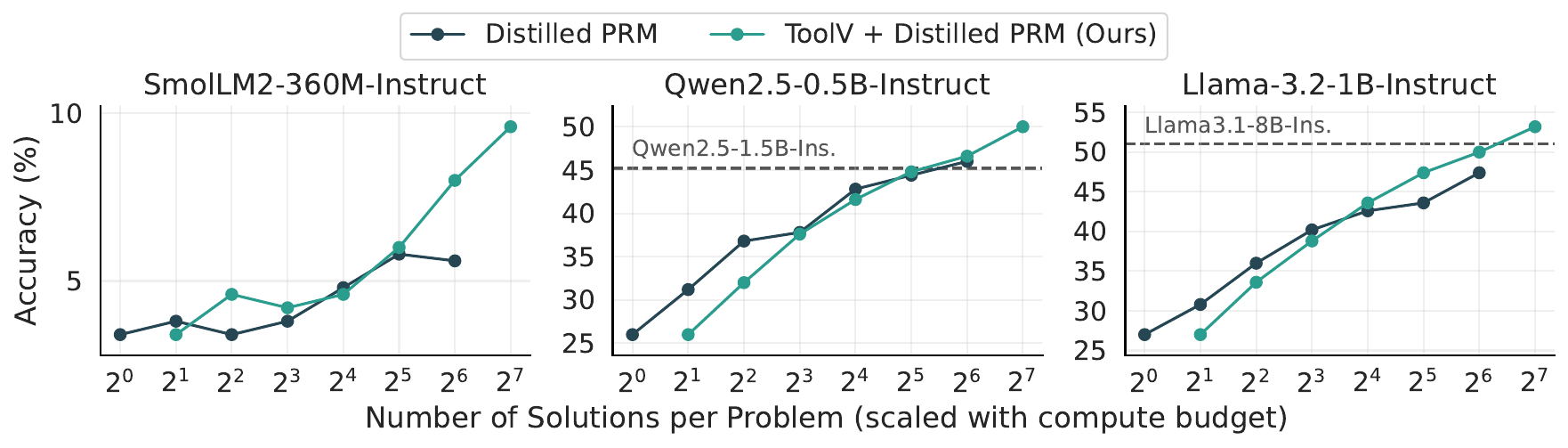}
    \vspace{-0.25in}
    \caption{\textbf{MATH500 with PRM under a scaled x-axis reflecting compute budget.} PRM only performs better at small budgets, but our method surpasses it as test-time scaling increases.} 
    \label{fig:main-prm-math-scaled}
    \vspace{-0.15in}
\end{figure*}
\subsection{Compute budget discussion with PRM}
We extend the discussion in Section~\ref{sec:discussion}.
Since PRMs require no generation, adding ToolV can be interpreted as allocating roughly twice the compute budget compared to using PRM alone. \autoref{fig:main-prm-math-scaled} presents the compute scaled comparison. PRM without ToolV performs better when the number of generated solutions is small, where the additional verification cost is not yet leveraged. As the compute budget increases, ToolV combined with PRM consistently achieves higher accuracy than PRM alone. This shows that ToolV remains effective under test-time scaling, especially for small model settings.

\begin{table*}[t]
\centering
\caption{Performance comparison of GenRM variants across different numbers of generated solutions $N$.}
\label{appendix:tab:genrm-two-stage}
\begin{tabular}{lccccccc}
\toprule
Method & $N=1$ & $N=2$ & $N=4$ & $N=8$ & $N=16$ & $N=32$ & $N=64$ \\
\midrule
One-stage GenRM   & 27.0 & 31.6 & 36.0 & 40.0 & 42.0 & 44.8 & 45.4 \\
Two-stage GenRM   & 27.0 & 29.6 & 35.0 & 39.8 & 42.0 & 44.4 & 46.0 \\
GenRM + ToolV     & 27.0 & \bf 33.8 & \bf 38.8 & \bf 43.8 & \bf 47.2 & \bf 49.4 & \bf 50.6 \\
\bottomrule
\end{tabular}
\end{table*}

\subsection{Two-stage GenRM without tool integration}
\autoref{appendix:tab:genrm-two-stage} reports the performance of two stage verification using GenRM without any tool integration. The results show that simply adding an additional verification stage does not provide the gains observed with ToolV. This confirms that the improvements come from tool integration itself rather than from increased compute through multi-stage verification.

\subsection{Exact performance of the plot}

To provide clear measurements corresponding to \autoref{fig:main-prm-math}, \autoref{fig:main-math}, and \autoref{fig:main-gsm8k}, we include tables that report the exact performance values for each model.

For \autoref{fig:main-prm-math}, the exact scores are shown in \autoref{appendix:fig:math500:smollm:prm}, \autoref{appendix:fig:math500:qwen:prm}, and \autoref{appendix:fig:math500:llama:prm} for SmolLM2-360M-Instruct, Qwen2.5-0.5B-Instruct, and Llama-3.2-1B-Instruct, respectively.

For \autoref{fig:main-math}, the corresponding tables are provided in \autoref{appendix:fig:math500:smollm:genrm}, \autoref{appendix:fig:math500:qwen:genrm}, and \autoref{appendix:fig:math500:llama:genrm}.

For \autoref{fig:main-gsm8k}, detailed results appear in \autoref{appendix:fig:gsm8k:smollm:genrm}, \autoref{appendix:fig:gsm8k:qwen:genrm}, and \autoref{appendix:fig:gsm8k:llama:genrm}.

\begin{figure*}[t]
    \centering
    \includegraphics[width=\linewidth]{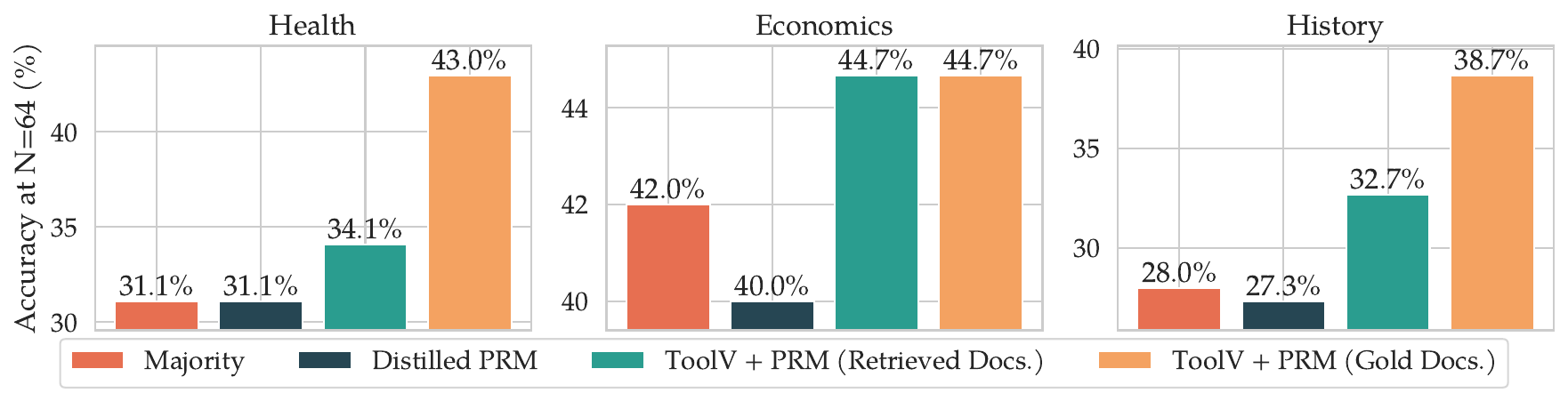}
    \vspace{-0.25in}
    \caption{\textbf{MMLU-Pro with PRM.} Weighted Best-of-N ($N=64$) performance of Llama-3.2-1B-Instruct on three knowledge-intensive domains, illustrating the effect of different document sources in ToolV + Distilled PRM (retrieved and gold documents). ToolV extends beyond math, improving PRM on multi-domain knowledge-intensive reasoning tasks.} 
    \label{fig:main-mmlu}
    \vspace{-0.1in}
\end{figure*}
\begin{figure*}[t!]
    \centering
    \includegraphics[width=\linewidth]{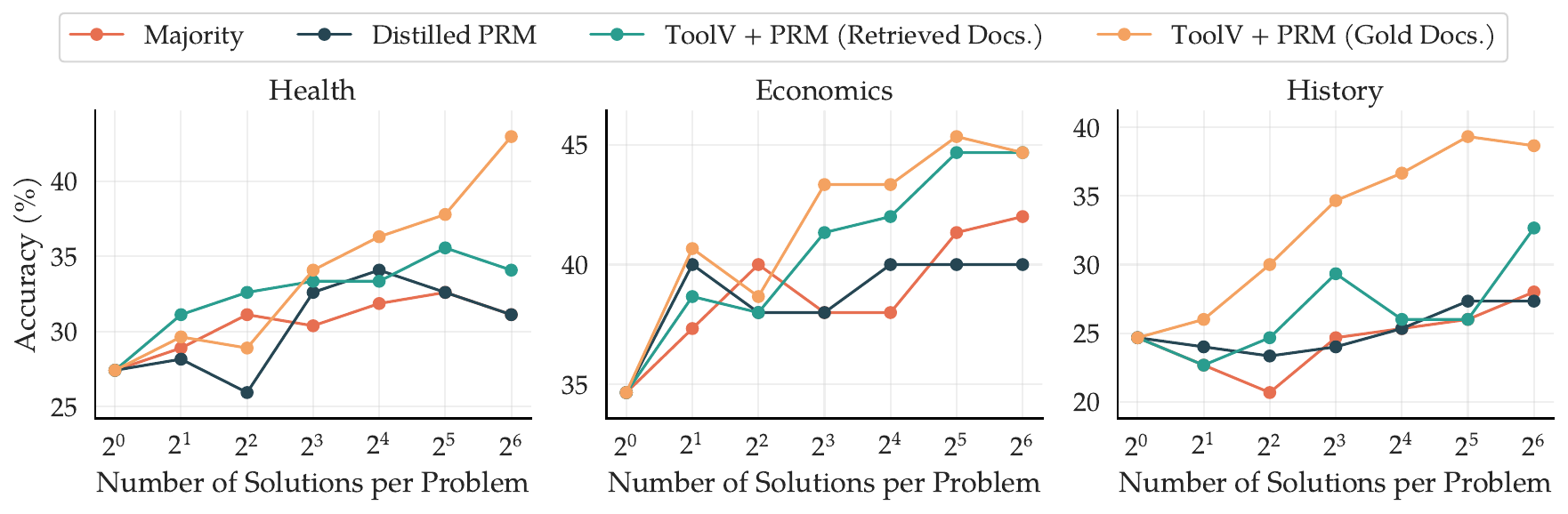}
    \vspace{-0.25in}
    \caption{\textbf{MMLU-Pro with PRM (Line Plot).} Weighted Best-of-N performance of Llama-3.2-1B-Instruct on three knowledge-intensive domains from MMLU-Pro.} 
    \label{appendix:fig:mmlu}
    \vspace{-0.1in}
\end{figure*}

\begin{figure}[t!]
    \centering

    \begin{subfigure}[b]{0.4\textwidth}
        \includegraphics[width=\textwidth]{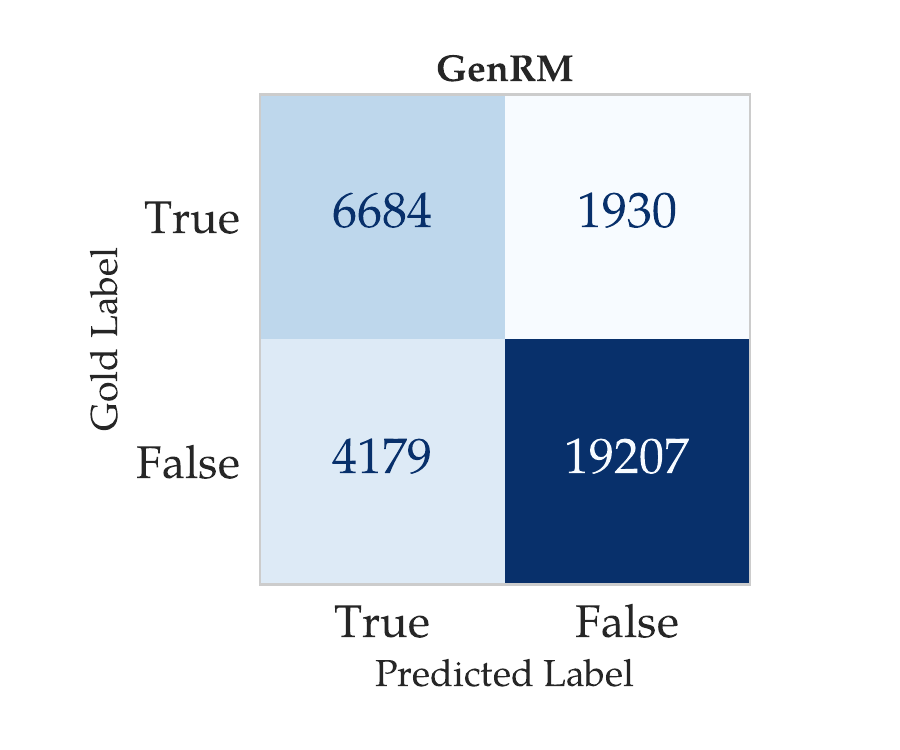}
    \end{subfigure}
    \hfill
    \begin{subfigure}[b]{0.4\textwidth}
        \includegraphics[width=\textwidth]{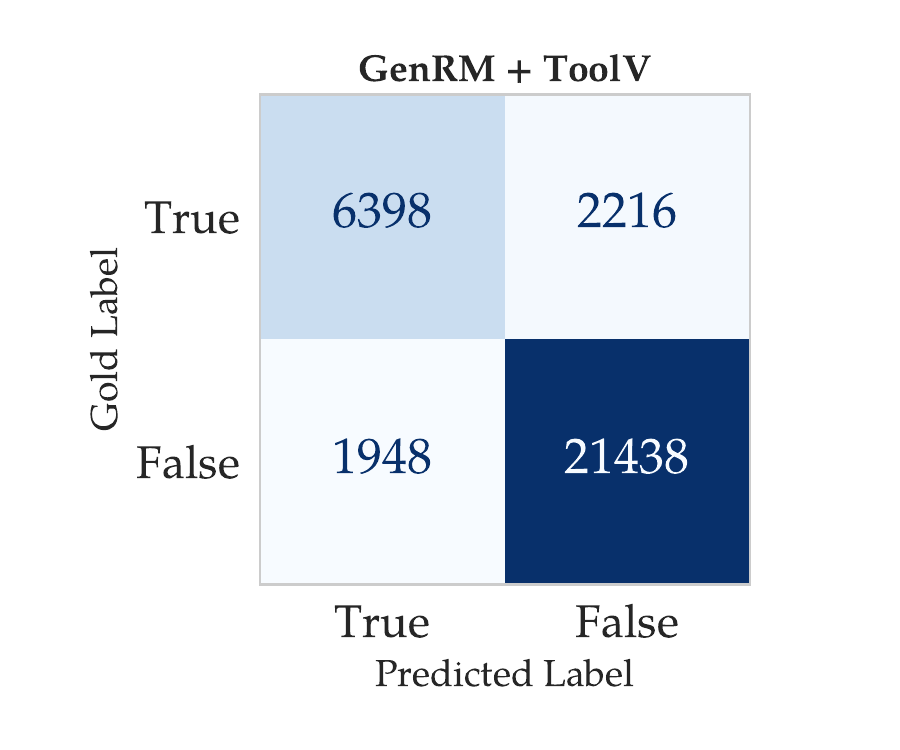}
    \end{subfigure}
    \vspace{-0.2in}
    \caption{Confusion matrix of verification results from GenRM and GenRM + ToolV, where True denotes the correct solution. This result indicates ToolV improves the performance on removing false positive cases. Results are from experiments with Llama-3.2-1B-Instruct on MATH500 benchmark.}
    \label{appendix:fig:accuracy}
\end{figure}
\begin{figure*}[t!]
    \centering
    \includegraphics[width=0.4\linewidth]{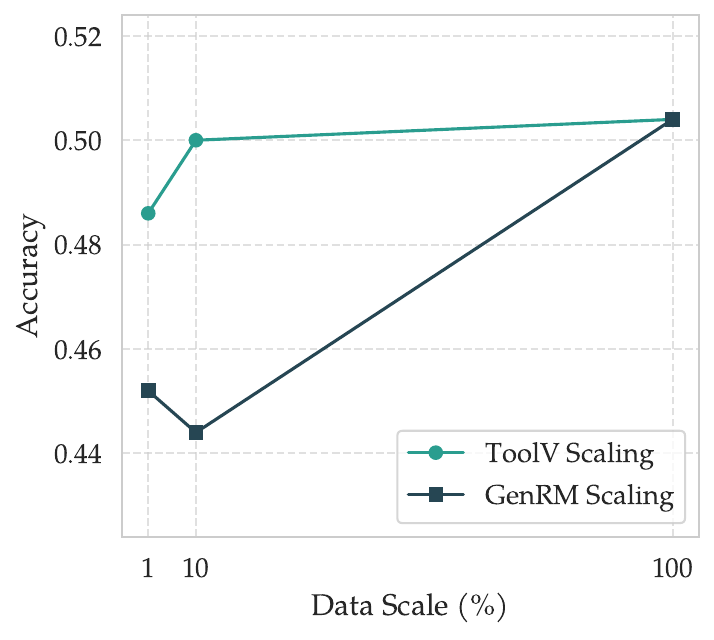}
    \vspace{-0.1in}
    \caption{\textbf{Data-scale experiment.} Performance comparison with varying distillation data sizes. In each plot, one verifier is distilled with 10\% or 1\% of data, while the other uses the full dataset. ToolV remains competitive even with only 10\% of data, highlighting its data efficiency. Results are from experiments with Llama-3.2-1B-Instruct on MATH500.}  
    \label{appendix:fig:datascale}
    \vspace{-0.1in}
\end{figure*}

\begin{table}[t]
\centering
\begin{minipage}{0.48\linewidth}
\centering
\begin{tabular}{l c}
\toprule
\textbf{MATH500} & \textbf{Accuracy} \\
\midrule
PRM & 47.4 \\
ToolV & 52.4 \\
ToolV + PRM & 53.2 \\
\bottomrule
\end{tabular}
\caption{Ablation on ToolV with PRM verifier with Llama-3.2-1B-Instruct.}
\label{appendix:tab:ablation_toolv_prm}
\end{minipage}
\hfill
\begin{minipage}{0.48\linewidth}
\centering
\begin{tabular}{l c}
\toprule
\textbf{GSM8K} & \textbf{Accuracy} \\
\midrule
GenRM & 71.87 \\
ToolV & 73.62 \\
ToolV + GenRM & 74.60 \\
\bottomrule
\end{tabular}
\caption{Ablation on ToolV with GenRM verifier with Llama-3.2-1B-Instruct.}
\label{appendix:tab:ablation_toolv_genrm}
\end{minipage}
\end{table}

\begin{table}[t]
\centering
\caption{Weighted Best-of-N accuracy on MATH500 for the SmolLM2-360M-Instruct with PRM verification. ToolV provides clear improvements over PRM and majority voting.}
\label{appendix:fig:math500:smollm:prm}
\begin{tabular}{lccccccc}
\toprule
Method & 1 & 2 & 4 & 8 & 16 & 32 & 64 \\
\midrule
Majority & 3.4 & 3.4 & 3.4 & 3.8 & 5.0 & 5.8 & 6.2 \\
PRM Verifier & 3.4 & 3.8 & 3.4 & 3.8 & 4.8 & 5.8 & 5.6 \\
ToolV + PRM Verifier (Ours) & 3.4 & 4.6 & 4.2 & 4.6 & 6.0 & 8.0 & 9.6 \\
\bottomrule
\end{tabular}
\end{table}

\begin{table}[t]
\centering
\caption{Weighted Best-of-N accuracy on MATH500 for the Qwen2.5-0.5B-Instruct model with PRM verification. ToolV consistently improves over PRM verification and majority voting.}
\label{appendix:fig:math500:qwen:prm}
\begin{tabular}{lccccccc}
\toprule
Method & 1 & 2 & 4 & 8 & 16 & 32 & 64 \\
\midrule
Majority & 26.0 & 26.0 & 32.2 & 37.0 & 39.2 & 41.6 & 42.8 \\
PRM Verifier & 26.0 & 31.2 & 36.8 & 37.8 & 42.8 & 44.4 & 46.0 \\
ToolV + PRM Verifier (Ours) & 26.0 & 32.0 & 37.6 & 41.6 & 44.8 & 46.6 & 50.0 \\
\bottomrule
\end{tabular}
\end{table}

\begin{table}[t]
\centering
\caption{Weighted Best-of-N accuracy on MATH500 for the Llama-3.2-1B-Instruct model with PRM verification. ToolV provides clear gains over PRM based verification and majority voting methods.}
\label{appendix:fig:math500:llama:prm}
\begin{tabular}{lccccccc}
\toprule
Method & 1 & 2 & 4 & 8 & 16 & 32 & 64 \\
\midrule
Majority & 27.0 & 27.0 & 33.2 & 38.8 & 41.6 & 43.8 & 45.8 \\
PRM Verifier & 27.0 & 30.8 & 36.0 & 40.2 & 42.6 & 43.6 & 47.4 \\
ToolV + PRM Verifier (Ours) & 27.0 & 33.6 & 38.8 & 43.6 & 47.4 & 50.0 & 53.2 \\
\bottomrule
\end{tabular}
\end{table}

\begin{table}[t]
\centering
\caption{Weighted Best-of-N accuracy on MATH500 for the SmolLM2-360M-Instruct model with GenRM verification. ToolV provides consistent gains over verifier based and majority voting methods.}
\label{appendix:fig:math500:smollm:genrm}
\begin{tabular}{lccccccc}
\toprule
Method & 1 & 2 & 4 & 8 & 16 & 32 & 64 \\
\midrule
Majority & 3.2 & 3.2 & 3.2 & 3.6 & 4.8 & 5.6 & 5.8 \\
Zero shot GenRM & 3.2 & 3.2 & 3.2 & 3.6 & 4.8 & 5.6 & 5.8 \\
Distilled GenRM & 3.2 & 3.4 & 3.8 & 3.8 & 5.4 & 7.2 & 7.0 \\
ToolV + Distilled GenRM (Ours) & 3.2 & 4.0 & 4.0 & 5.2 & 6.8 & 8.4 & 10.8 \\
\bottomrule
\end{tabular}
\end{table}

\begin{table}[t]
\centering
\caption{Weighted Best-of-N accuracy on MATH500 for the Qwen2.5-0.5B-Instruct model with GenRM verification. The results show that ToolV provides clear gains over verifier based and majority voting methods by filtering calculation errors that generative verification cannot resolve.}
\label{appendix:fig:math500:qwen:genrm}
\begin{tabular}{lccccccc}
\toprule
Method & 1 & 2 & 4 & 8 & 16 & 32 & 64 \\
\midrule
Majority & 26.0 & 26.0 & 32.2 & 37.0 & 39.2 & 41.6 & 42.8 \\
Zero shot GenRM & 26.0 & 27.6 & 31.0 & 37.4 & 39.8 & 42.6 & 43.0 \\
Distilled GenRM & 26.0 & 31.6 & 34.4 & 38.4 & 40.8 & 43.8 & 44.6 \\
ToolV + Distilled GenRM (Ours) & 26.0 & 32.0 & 36.8 & 40.8 & 44.0 & 45.0 & 47.2 \\
\bottomrule
\end{tabular}
\end{table}

\begin{table}[t]
\centering
\caption{Weighted Best-of-N accuracy on MATH500 for the Llama-3.2-1B-Instruct model with GenRM verification. The results show that ToolV provides clear gains over verifier based and majority voting methods by filtering calculation errors that generative verification cannot resolve.}
\label{appendix:fig:math500:llama:genrm}
\begin{tabular}{lccccccc}
\toprule
Method & 1 & 2 & 4 & 8 & 16 & 32 & 64 \\
\midrule
Majority & 27.0 & 27.0 & 33.2 & 38.8 & 41.6 & 43.8 & 45.8 \\
Zero-shot GenRM & 27.0 & 28.0 & 33.8 & 38.8 & 40.6 & 44.4 & 46.0 \\
Distilled GenRM & 27.0 & 31.6 & 36.0 & 40.0 & 42.0 & 44.8 & 45.4 \\
ToolV + Distilled GenRM (Ours) & 27.0 & 33.8 & 38.8 & 43.8 & 47.2 & 49.4 & 50.6 \\
\bottomrule
\end{tabular}
\end{table}

\begin{table}[t]
\centering
\caption{Weighted Best-of-N accuracy on GSM8K for the SmolLM2-360M-Instruct model with GenRM verification. ToolV provides clear gains over verifier based and majority voting methods.}
\label{appendix:fig:gsm8k:smollm:genrm}
\begin{tabular}{lccccccc}
\toprule
Method & 1 & 2 & 4 & 8 & 16 & 32 & 64 \\
\midrule
Majority & 10.92 & 10.92 & 13.57 & 18.35 & 23.43 & 26.69 & 28.51 \\
Zero shot GenRM & 10.92 & 10.92 & 13.57 & 18.35 & 23.43 & 26.69 & 28.51 \\
Distilled GenRM & 10.92 & 14.94 & 18.80 & 23.05 & 26.69 & 29.72 & 31.39 \\
ToolV + Distilled GenRM (Ours) & 10.92 & 15.31 & 20.92 & 26.54 & 32.75 & 36.85 & 39.88 \\
\bottomrule
\end{tabular}
\end{table}

\begin{table}[t]
\centering
\caption{Weighted Best-of-N accuracy on GSM8K for the Qwen2.5-0.5B-Instruct model with GenRM verification. ToolV consistently outperforms verifier based and majority voting methods.}
\label{appendix:fig:gsm8k:qwen:genrm}
\begin{tabular}{lccccccc}
\toprule
Method & 1 & 2 & 4 & 8 & 16 & 32 & 64 \\
\midrule
Majority & 44.28 & 44.28 & 50.49 & 56.10 & 59.14 & 60.80 & 61.71 \\
Zero shot GenRM & 44.28 & 43.44 & 50.11 & 56.63 & 59.82 & 61.18 & 62.62 \\
Distilled GenRM & 44.28 & 49.89 & 55.65 & 59.59 & 62.70 & 64.52 & 66.03 \\
ToolV + Distilled GenRM (Ours) & 44.28 & 50.87 & 56.56 & 62.62 & 64.90 & 66.57 & 68.31 \\
\bottomrule
\end{tabular}
\end{table}

\begin{table}[t]
\centering
\caption{Weighted Best-of-N accuracy on GSM8K for the Llama-3.2-1B-Instruct model with GenRM verification. ToolV shows consistent improvements over verifier based and majority voting methods.}
\label{appendix:fig:gsm8k:llama:genrm}
\begin{tabular}{lccccccc}
\toprule
Method & 1 & 2 & 4 & 8 & 16 & 32 & 64 \\
\midrule
Majority & 49.96 & 49.96 & 58.15 & 64.44 & 67.17 & 68.46 & 69.45 \\
Zero shot GenRM & 49.96 & 51.18 & 58.76 & 64.75 & 67.48 & 68.99 & 70.58 \\
Distilled GenRM & 49.96 & 57.09 & 61.94 & 66.94 & 69.67 & 71.11 & 71.87 \\
ToolV + Distilled GenRM (Ours) & 49.96 & 58.15 & 64.29 & 68.23 & 71.04 & 73.62 & 74.60 \\
\bottomrule
\end{tabular}
\end{table}

\section{Case analysis}
\label{appendix:sec:case_analysis}
In this section, we present cases where \textbf{ToolV} either enables or fails the self-verification process.  
All examples are taken from level 4 problems in \textbf{MATH500}.

In Example~\ref{ex:success}, the solution is incorrect—it computes \( 37 \times 2 \) as \( 374 \), which is wrong. 
However, the verifier (\textbf{GenRM-CoT}) fails to detect this error and incorrectly marks the step as correct, ultimately concluding that the entire solution is valid.  
In this case, \textbf{ToolV} implements a program that checks the correctness of the solution by solving the equation using the \texttt{SymPy} library.  
As expected, the result of the check is \texttt{False}, successfully identifying and removing the incorrect solution.

Occasionally, \textbf{ToolV} may produce incorrect Python code, as discussed in~\autoref{appendix:sec:python_accuracy}.  
To illustrate such a case, we present a failure example in Example~\ref{ex:fail_part1}.  
Here, the solution is actually correct, and \textbf{GenRM} correctly verifies it.  
However, \textbf{ToolV} generates code that leads to an incorrect conclusion.  
Although the code appears reasonable, the comparison \texttt{sp.simplify(result) == sp.simplify(expected\_result)} might return \texttt{False}, since symbolic expressions can differ in form even when they are mathematically equivalent.  
To properly compare equations, the code should instead use \texttt{sp.Eq(...)} which is more reliable for symbolic equality checks.

\section{LLM Usage}
We used large language models (LLMs) solely as a writing assistant, for improving grammar and clarity of the paper. No part of the research ideation, experimental design, or analysis relied on LLMs.

\clearpage

\begin{examplebox}[label={ex:success}]{Success case 1 of ToolV}
\textbf{Problem:} Half the value of $3x-9$ is $x+37$. What is the value of $x$?

\rule{\linewidth}{0.4pt}

\textbf{Solution:} \#\# Step 1: Translate the given information into an equation

We are given that half the value of $3x-9$ is $x+37$. This can be translated into the equation $\frac{3x-9}{2} = x+37$.

\#\# Step 2: Multiply both sides of the equation by 2 to clear the fraction

\textcolor{red}{Multiplying both sides of the equation by 2 to clear the fraction gives us $3x-9 = 2x + 374$.}

\#\# Step 3: Subtract $2x$ from both sides of the equation to isolate the variable term on one side

Subtracting $2x$ from both sides of the equation gives us $x - 9 = 374$.

\#\# Step 4: Add 9 to both sides of the equation to further isolate the variable term

Adding 9 to both sides of the equation gives us $x = 383$.

The final answer is: $\boxed{383}$

\rule{\linewidth}{0.4pt}

\textbf{GenRM-CoT:} Let's verify the provided solution step by step.

\#\# Step 1: Translate the given information into an equation

The solution states that half the value of \(3x - 9\) is \(x + 37\). The translation into the equation \(\frac{3x - 9}{2} = x + 37\) is correct. 

**Verification:** This step is correct.

\#\# Step 2: Multiply both sides of the equation by 2 to clear the fraction

The solution claims that multiplying both sides by 2 gives \(3x - 9 = 2x + 374\). \textbf{\textcolor{red}{This is accurate since multiplying both sides by 2 eliminates the fraction.}}

\textbf{\textcolor{red}{**Verification:** This step is correct.}}

\#\# Step 3: Subtract \(2x\) from both sides of the equation to isolate the variable term on one side

The solution states that subtracting \(2x\) from both sides results in \(x - 9 = 374\). This is also correct.

**Verification:** This step is correct.

\#\# Step 4: Add 9 to both sides of the equation to further isolate the variable term

The solution states that adding 9 to both sides gives \(x = 383\). This is accurate as well, as adding 9 to \(x - 9\) gives \(x + 9 = 374\) and thus \(x = 374 - 9\).

**Verification:** This step is correct.

The final answer is given as \( \boxed{383} \). 

\#\#\# Final Verification

All steps have been verified to be correct, and the final answer is consistent with the calculations.

Verification: Is the answer correct (Yes/No)? \textcolor{red}{Yes}

\rule{\linewidth}{0.4pt}

\textbf{ToolV}: \texttt{import sympy as sp}

\vspace{1em}

\# Define the variable

\texttt{x = sp.symbols(`x')}

\vspace{1em}

\# Given equation

\texttt{equation = sp.Eq((3*x - 9) / 2, x + 37)}

\vspace{1em}

\# Solve the equation

\texttt{solution = sp.solve(equation, x)[0]}

\vspace{1em}

\# Verify the solution

\texttt{is\_correct = sp.simplify(solution) == 383}

\texttt{print(is\_correct)}

\textbf{Interpreter output}: \textcolor{blue}{False}

\end{examplebox}
\begin{examplebox}[label={ex:fail_part1}]{Failure case 1 of ToolV}
\textbf{Problem:} The operation $\&$ is defined for positive integers $a$ and $b$ as $a \& b = \displaystyle\frac{\sqrt{a b + a}}{\sqrt{a b - b}}$. What is the value of $9 \& 2$? Express your answer as a common fraction in simplest radical form.

\rule{\linewidth}{0.4pt}

\textbf{Solution:} \#\# Step 1: Substitute the given values of $a$ and $b$ into the definition of $\&$.
We have $a = 9$ and $b = 2$, so we can substitute these values into the definition of $\&$ as follows:

$$9 \& 2 = \displaystyle\frac{\sqrt{(9)(2) + 9}}{\sqrt{(9)(2) - 2}}$$

\#\# Step 2: Simplify the expressions inside the square roots.
We can simplify the expressions inside the square roots as follows:

$$9 \& 2 = \displaystyle\frac{\sqrt{18 + 9}}{\sqrt{18 - 2}} = \displaystyle\frac{\sqrt{27}}{\sqrt{16}}$$

\#\# Step 3: Further simplify the expressions by finding square roots of perfect squares inside the fraction.
We can simplify the expressions by finding square roots of perfect squares inside the fraction as follows:

$$9 \& 2 = \displaystyle\frac{3\sqrt{3}}{4}$$

The final answer is: $\boxed{\frac{3\sqrt{3}}{4}}$

\rule{\linewidth}{0.4pt}

\textbf{GenRM-CoT:} Let's verify the solution step by step.

\#\# Step 1: Substituting values
The solution correctly substitutes \( a = 9 \) and \( b = 2 \) into the expression for \( 9 \& 2 \):
\[
9 \& 2 = \frac{\sqrt{(9)(2) + 9}}{\sqrt{(9)(2) - 2}}.
\]
This step is correct.

\#\# Step 2: Simplifying inside the square roots
The next part of the solution simplifies the expressions inside the square roots:
\[
9 \& 2 = \frac{\sqrt{18 + 9}}{\sqrt{18 - 2}} = \frac{\sqrt{27}}{\sqrt{16}}.
\]
This simplification is also correct.

\#\# Step 3: Finding square roots
The solution proceeds to find the square roots:
\[
\frac{3\sqrt{3}}{4}.
\]
This simplification is correct as \( \sqrt{27} = 3\sqrt{3} \) and \( \sqrt{16} = 4 \).

\#\# Final Answer
The final answer given is:
\[
\boxed{\frac{3\sqrt{3}}{4}}.
\]
This matches the expected answer provided in the verification section.

In summary, each step in the solution was executed correctly and led to the correct final answer.

Verification: Is the answer correct (Yes/No)? \textbf{\textcolor{blue}{Yes}}
\end{examplebox}
\begin{examplebox}[label={ex:fail:part2}]{Failure case 1 of ToolV (cont.)}
\textbf{ToolV:} \texttt{import sympy as sp}

\vspace{1em}

\# Define the operation

\texttt{def operation(a, b):}

\quad\quad\texttt{return sp.sqrt((a * b + a) / (a * b - b))}

\# Given values

\texttt{a = 9}

\texttt{b = 2}

\vspace{1em}

\# Calculate the result

\texttt{result = operation(a, b)}

\vspace{1em}

\# Expected result

\texttt{expected\_result = sp.Rational(3, 4) * sp.sqrt(3)}

\vspace{1em}

\# Verify if the result matches the expected result

\textcolor{red}{\texttt{is\_correct = sp.simplify(result) == sp.simplify(expected\_result)}}

\texttt{print(is\_correct)}

\textbf{Interpreter output}: \textbf{\textcolor{red}{False}}

\end{examplebox}

\end{document}